%% file: main.tex
\documentclass{article}
\usepackage{arxiv}
\usepackage[utf8]{inputenc} 
\usepackage[T1]{fontenc}    
\usepackage{url}            
\usepackage{amsfonts}       
\usepackage{nicefrac}       
\usepackage{microtype}      
\usepackage{lipsum}         
\usepackage{graphicx}
\usepackage{natbib}
\usepackage{doi}
\usepackage{subcaption}
\usepackage{rotating}
\usepackage{tcolorbox}
\tcbuselibrary{breakable}
\usepackage{multirow}
\usepackage{makecell}
\usepackage{booktabs}       
\usepackage{hyperref}       

\usepackage{amsmath}
\usepackage{amssymb}
\usepackage{mathtools}
\usepackage{algorithmic}
\usepackage{algorithm}
\usepackage{amsthm}
\usepackage{natbib}
\usepackage[capitalize,noabbrev]{cleveref}
\theoremstyle{plain}
\newtheorem{theorem}{Theorem}[section]

\theoremstyle{definition}

\usepackage[nolist,nohyperlinks]{acronym}
\acrodef{sac}[SAC]{Self-Anchored Consensus}
\usepackage[textsize=tiny]{todonotes}
\usepackage{adjustbox}
\usepackage{framed}
\usepackage{wrapfig}
\usepackage[table]{xcolor}
\newcommand{\jygl}{\cellcolor{gray!15}}

\title{Ghosted Layers: Unconstrained Activation Alignment for Recovering Layer-Pruned LLMs}

\author{
  Vincent-Daniel Yun\textsuperscript{1}, 
  Junhyuk Jo\textsuperscript{2}, 
  Sai Praneeth Karimireddy\textsuperscript{1},
  Sunwoo Lee\textsuperscript{2}\thanks{Corresponding Author. Under Review.} \\ \\
  \textsuperscript{1}University of Southern California, USA\\ 
  \{yunjuyou, karimire\}@usc.edu \\
  \textsuperscript{2}Inha University, Republic of Korea \\
  \{911whwnsgur, sunwool\}@inha.ac.kr  \\
}

\date{}

\renewcommand{\shorttitle}{}

\hypersetup{
pdftitle={Ghosted Layers}
}

\begin{document}
\maketitle

\begin{abstract}
\input{contents/0-abstract}
\end{abstract}

\section{Introduction}
\input{contents/1-intro}


\section{Related works}
\input{contents/2-related}


\section{Method: Ghosted Layers}
\input{contents/3-method}

\section{Unconstrained solution space analysis}
\input{contents/4-analysis}

\section{Experimental results}
\input{contents/4-results}

\section{Discussion}
\input{contents/5-discussion}

\textbf{Limitations.}
\input{contents/6-limitation}

\section{Conclusion}
\input{contents/6-conclusion}

\bibliographystyle{plain}
\bibliography{main}

\appendix

\clearpage
\appendix

\input{contents/appendix}

\end{document}

%% file: contents/0-abstract.tex
Layer pruning removes entire Transformer decoder blocks from large language models, but introduces a mismatch between the hidden state received by the next surviving layer and the distribution it was trained to process, leading to significant performance degradation. We propose Ghosted Layers, a training-free recovery module that addresses this issue by solving a boundary activation alignment problem. Our method derives a closed-form optimal linear operator from a small calibration set to reconstruct the activation discrepancy introduced by the pruned layers. We show that this solution corresponds to the unconstrained optimum of the alignment objective, whereas existing methods are restricted to constrained solutions over limited operator subspaces. Experiments across multiple LLM backbones and pruning strategies demonstrate that our method consistently improves accuracy and perplexity over prior training-free baselines, while preserving the efficiency gains of layer pruning. Official code repository: \url{https://github.com/daniel-eai/ghosted_layers_official_repository/}.

%% file: contents/1-intro.tex
Large language models (LLMs) have shown strong capabilities across a wide range of natural language tasks~\citep{brown2020gpt3, touvron2023llama, achiam2023gpt4}, but their size makes deployment costly. Various compression techniques have been explored to address this, including unstructured pruning~\citep{frantar2023sparsegpt, sun2024wanda} and layer pruning~\citep{men2024shortgpt, gromov2024unreasonable, kim2024shortened, song2024sleb}. Among these, layer pruning stands out for its practicality, as it removes entire Transformer decoder blocks and yields a smaller model that runs on standard inference stacks without custom kernels or architectural changes. Yet pruned models often exhibit substantial accuracy degradation: pruning a block of layers eliminates the intermediate transformations between the surviving layers, causing a distribution shift at the pruning boundary that propagates through downstream layers. This motivates the need for a recovery mechanism to compensate for the missing layers.

Recent work mitigates this through a variety of training-free recovery modules. \texttt{ReplaceMe}~\citep{shopkhoev2025replaceme} approximates the pruned block's computation with a linear transformation absorbed into the surviving weights, but does not directly address the mismatch at the pruning boundary. In contrast, \texttt{LinearPatch}~\citep{lp} directly targets the \emph{boundary activation mismatch}, the discrepancy between the hidden state expected by the next surviving layer and the one it actually receives, by inserting a linear operator at the pruning boundary. This operator is implemented as a Hadamard-rotated channel-wise scaling and is symmetric by construction, restricting it to a strict subspace of linear operators and preventing it from reaching the optimum of the alignment objective in the full operator space.

To empirically verify this limitation, we quantify the boundary activation mismatch via the \emph{boundary activation error}, defined as the mean absolute error (MAE) between the post-boundary hidden state of the original model and the input received by the next layer in the pruned model. Figure~\ref{fig:boundary_mae} compares representative post-pruning recovery methods on the resulting boundary activation error. Existing methods leave a substantial portion of the error uncorrected, suggesting that the boundary activation mismatch is not entirely resolved after recovery.

\begin{figure}[t]
\centering
\includegraphics[width=\linewidth]{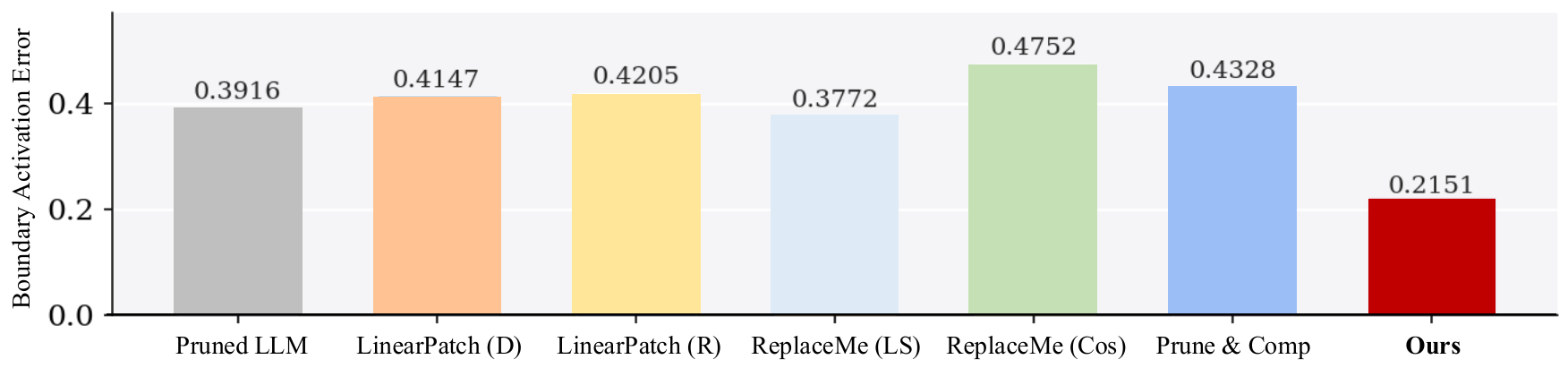}
\caption{Mean absolute error between the expected boundary activation and the activation received by downstream layers after pruning on LLaMA-3.1-8B. The pruned model is obtained using LLM-Streamline~\citep{chen2024streamline} with $n=7$ layers removed. We compare existing post-pruning recovery methods, including Prune\&Comp~\citep{chen2025prunecomp}, ReplaceMe~\citep{shopkhoev2025replaceme}, and LinearPatch~\citep{lp}, against our Ghosted Layers.
}
\label{fig:boundary_mae}
\end{figure}

We address this limitation by deriving the closed-form unconstrained optimum of the boundary activation alignment objective over the full space of linear operators. In contrast, LinearPatch corresponds to a constrained solution restricted to a specific operator subspace. We further show that the optimal operator contains a substantial anti-symmetric component, which is structurally inaccessible to symmetric constructions such as \texttt{LinearPatch}.

Building on this analysis, we propose \texttt{Ghosted Layers}, a training-free recovery module that inserts the closed-form optimal operator at the pruning boundary via a forward hook. It is compatible with any pruning criterion and model architecture, and consistently improves recovery across multiple LLM backbones and benchmarks. These results suggest that solving boundary activation alignment at its unconstrained optimum is sufficient to substantially recover layer-pruned LLMs without retraining.

\textbf{Contribution} of this study can be summarized as follows:
\begin{itemize}
    \item We formulate post-pruning recovery as an activation alignment problem and derive its closed-form optimal solution from boundary activations.
    \item We show that this formulation yields an unconstrained optimum that includes a substantial anti-symmetric component, which is inaccessible to symmetric constructions such as \texttt{LinearPatch}.
    \item We propose \texttt{Ghosted Layers}, a training-free and plug-and-play recovery module compatible with any pruning criterion and model architecture, which consistently outperforms prior methods at matched inference cost.
\end{itemize}

%% file: contents/2-related.tex
\textbf{Structured pruning of LLMs.}
Structured pruning reduces model size by removing groups of parameters while preserving dense computation, enabling deployment without specialized kernels or sparse runtime support. Width pruning removes attention heads, MLP neurons, or hidden dimensions using importance scores derived from weights~\citep{ma2023llmpruner}, activations~\citep{an2024flap, ashkboos2024slicegpt}, or gradients~\citep{xia2024sheared}, but often introduces architectural irregularities and requires retraining or distillation to recover performance~\citep{muralidharan2024compact}. In contrast, depth pruning removes entire Transformer blocks while preserving the original architecture and requiring no specialized hardware support.

\textbf{Layer pruning.}
Various metrics have been proposed to identify redundant layers. ShortGPT~\citep{men2024shortgpt} uses a Block Influence (BI) score based on cosine similarity between input and output activations for one-shot pruning. SLEB~\citep{song2024sleb} iteratively removes layers that minimally increase perplexity on a calibration set. Shortened LLaMA~\citep{kim2024shortened} evaluates each layer via its perplexity impact under removal. LLM-Streamline~\citep{chen2024streamline} instead removes a \emph{contiguous} block of layers with high boundary activation similarity. These methods primarily focus on selecting \emph{which} layers to remove, leaving boundary mismatch to be handled separately.

\textbf{Post-pruning recovery.}
Prune\&Comp~\citep{chen2025prunecomp} rescales the surviving boundary weights
with per-channel scalars, capturing only channel-wise magnitude changes and
no cross-channel interaction.
\texttt{ReplaceMe}~\citep{shopkhoev2025replaceme} instead approximates the
pruned block's computation with a linear map of the boundary MLP output,
absorbed into the surviving MLP weights, so the repair acts on the block's
computation rather than on the boundary hidden state itself.
\texttt{LinearPatch}~\citep{lp} directly targets the
boundary activation mismatch with a symmetric linear operator parameterized
as a Hadamard-rotated diagonal scaling, confining the search to a strict
subspace of $\mathbb{R}^{C \times C}$.
Our \texttt{Ghosted Layers} shares the goal of reducing the boundary
activation mismatch, but operates over the unconstrained space of linear
maps and solves the resulting alignment problem in closed form, recovering
structure that the channel-wise, block-level, and symmetric parameterizations
above cannot express.

%% file: contents/3-method.tex
\label{sec:method}

Layer pruning removes transformer blocks to reduce model size, but the hidden state passed to the next surviving layer no longer matches the one it was trained to process, causing an activation mismatch at the pruning boundary that degrades downstream performance. We propose \textit{Ghosted Layers}, a training-free recovery method that mitigates this mismatch via a closed-form linear operator inserted at each pruning boundary (Figure~\ref{fig:main}).

\begin{figure*}[t]
  \centering
  \includegraphics[width=\linewidth]{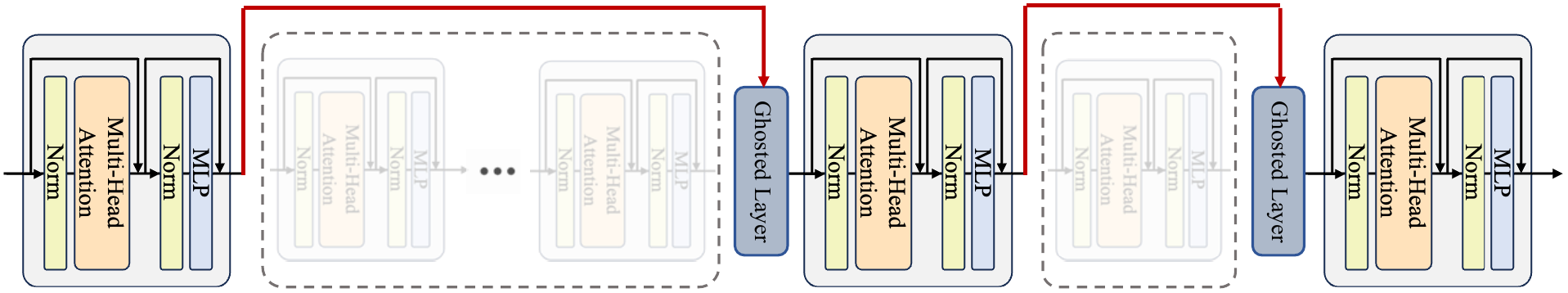}
  \caption{
    Ghosted Layers as drop-in replacements for pruned transformer blocks. One or more consecutive transformer blocks are removed (dashed), and a single Ghosted Layer (blue) takes their place at the boundary. The red arrows show the hidden state bypassing the pruned region and flowing through the Ghosted Layer before reaching the next surviving block. This applies to both contiguous and non-contiguous pruning, with exactly one Ghosted Layer substituting each pruned region.
  }
  \label{fig:main}
\end{figure*}

\subsection{Notation and setup}
\label{sec:notation}

Let $\mathcal{M} = \{f^{(\ell)}\}_{\ell=0}^{L-1}$ be a pre-trained autoregressive LLM with $L$ Transformer decoder layers, where $f^{(\ell)} : \mathbb{R}^{B \times T \times C} \to \mathbb{R}^{B \times T \times C}$ and $B$, $T$, $C$ denote batch size, sequence length, and hidden dimension. Under the standard pre-norm residual architecture:
\begin{equation}
    \mathbf{X}^{(\ell+1)}
    =
    \mathbf{X}^{(\ell)}
    +
    f^{(\ell)}\!\bigl(\mathbf{X}^{(\ell)};\,\theta^{(\ell)}\bigr),
    \qquad \ell = 0, \ldots, L-1.
    \label{eq:residual}
\end{equation}
Layer pruning removes a contiguous block $\mathcal{B} = \{\ell^*, \ldots, \ell^*\!+\!n\!-\!1\}$ of $n$ layers, where $\ell^*$ and $\ell^*\!+\!n$ are the \emph{pre-boundary} and \emph{post-boundary} indices. Unrolling Eq.~\eqref{eq:residual} over $\mathcal{B}$, the post-boundary hidden state in the original model satisfies:
\begin{equation}
    \mathbf{X}^{(\ell^*+n)}
    =
    \mathbf{X}^{(\ell^*)}
    +
    \underbrace{
        \sum_{k=\ell^*}^{\ell^*+n-1}
        f^{(k)}\!\bigl(\mathbf{X}^{(k)};\,\theta^{(k)}\bigr)
    }_{\displaystyle \boldsymbol{\Delta}^{(\ell^*,\,n)}}.
    \label{eq:unroll}
\end{equation}
The pruned model sets $\mathbf{X}^{(\ell^*+n)}_{\mathrm{pruned}} = \mathbf{X}^{(\ell^*)}$, so the activation received by the first surviving layer differs from $\mathbf{X}^{(\ell^*+n)}$ by exactly $\boldsymbol{\Delta}^{(\ell^*,n)}$. All downstream layers, which were trained to process $\mathbf{X}_{\mathrm{post}}$, instead receive $\mathbf{X}_{\mathrm{pre}}$, and this activation mismatch propagates through the network.

Given a calibration corpus $\mathcal{D}$ of $N$ sequences of length $T$ (so $T_{\mathcal{D}} = NT$ tokens in total), we collect the hidden states at the two boundary layers from the original model: $\mathbf{X}_{\mathrm{pre}}, \mathbf{X}_{\mathrm{post}} \in \mathbb{R}^{T_{\mathcal{D}} \times C}$, where rows correspond to flattened tokens. We define the \emph{boundary activation gap}:
\begin{equation}
    \boldsymbol{\Delta}
    \triangleq
    \mathbf{X}_{\mathrm{post}} - \mathbf{X}_{\mathrm{pre}}
    \in \mathbb{R}^{T_{\mathcal{D}} \times C},
    \label{eq:delta}
\end{equation}
which is the empirical estimate of $\boldsymbol{\Delta}^{(\ell^*,n)}$ over $\mathcal{D}$, and quantifies the activation mismatch that any recovery method must reduce.

\subsection{Ghosted Layers}

\textit{Ghosted Layers} is a calibration-based plug-and-play module that reduces the boundary activation mismatch via a closed-form optimal linear operator. Given a small calibration set $\mathcal{D}$, the method proceeds in three offline steps: collect the boundary activations, compute the optimal linear operator in closed form, and insert it into the pruned model.

\subsubsection{Collecting boundary activations.}
\label{sec:step1}
We run the original model $\mathcal{M}$ on $\mathcal{D}$ and capture the hidden states at the two boundary layers via forward pre-hooks. A forward pre-hook fires immediately before a layer's computation, so $\mathbf{X}_{\mathrm{pre}} \in \mathbb{R}^{T_{\mathcal{D}} \times C}$ captures the input to the first pruned layer $\ell^*$, and $\mathbf{X}_{\mathrm{post}} \in \mathbb{R}^{T_{\mathcal{D}} \times C}$ captures the input to the first surviving layer $\ell^*\!+\!n$. The boundary activation gap is then:
$
    \boldsymbol{\Delta}
    =
    \mathbf{X}_{\mathrm{post}} - \mathbf{X}_{\mathrm{pre}}
    \in \mathbb{R}^{T_{\mathcal{D}} \times C},
    \label{eq:delta_step1}
$
which equals $\boldsymbol{\Delta}^{(\ell^*,n)}$ from Eq.~\eqref{eq:unroll} evaluated over $\mathcal{D}$.

\subsubsection{Closed-form optimal operator.}
\label{sec:step2}
We seek a linear operator $\mathbf{W} \in \mathbb{R}^{C \times C}$ that, when applied to the pre-boundary state, best approximates the post-boundary state of the unpruned model:
\begin{equation}
    \mathbf{W}^{*}
    =
    \arg\min_{\mathbf{W} \in \mathbb{R}^{C \times C}}
    \bigl\|\mathbf{X}_{\mathrm{pre}}\mathbf{W} - \mathbf{X}_{\mathrm{post}}\bigr\|_F^2.
    \label{eq:alignment_obj}
\end{equation}
Reparameterizing $\mathbf{W} = \mathbf{I} + \mathbf{M}$ and substituting into Eq.~\eqref{eq:alignment_obj} yields an equivalent objective:
\begin{equation}
    \min_{\mathbf{M} \in \mathbb{R}^{C \times C}}
    \bigl\|\mathbf{X}_{\mathrm{pre}}\mathbf{M} - \boldsymbol{\Delta}\bigr\|_F^2.
    \label{eq:M_obj}
\end{equation}
This reparameterization transforms the alignment problem into directly regressing the boundary activation gap $\boldsymbol{\Delta} = \mathbf{X}_{\mathrm{post}} - \mathbf{X}_{\mathrm{pre}}$ from the pre-boundary state $\mathbf{X}_{\mathrm{pre}}$. Rather than learning to reproduce the full post-boundary activation $\mathbf{X}_{\mathrm{post}}$, the operator $\mathbf{M}$ only needs to capture the incremental change induced by the pruned block. This decoupling isolates the target of learning to the quantity that actually differs between the pruned and unpruned models.

The minimum-norm least-squares solution to Eq.~\eqref{eq:M_obj} is:
\begin{equation}
    \mathbf{M}^{*}
    =
    \mathbf{X}_{\mathrm{pre}}^{\dagger}\,\boldsymbol{\Delta}
    =
    \mathbf{V}\,\boldsymbol{\Sigma}^{\dagger}\,\mathbf{U}^{\top}\,\boldsymbol{\Delta}
    \in \mathbb{R}^{C \times C},
    \label{eq:mstar_method}
\end{equation}
where $\mathbf{X}_{\mathrm{pre}} = \mathbf{U}\boldsymbol{\Sigma}\mathbf{V}^{\top}$ is the thin SVD with $\sigma_1 \geq \cdots \geq \sigma_C \geq 0$, and:
\begin{equation}
    \bigl(\boldsymbol{\Sigma}^{\dagger}\bigr)_{ii}
    =
    \begin{cases}
        \sigma_i^{-1} & \text{if } \sigma_i > \epsilon\,\sigma_1, \\
        0             & \text{otherwise,}
    \end{cases}
    \qquad \epsilon = 10^{-6}.
    \label{eq:sigma_dag_method}
\end{equation}
The threshold $\epsilon$ controls numerical stability and has no effect on the solution for well-conditioned $\mathbf{X}_{\mathrm{pre}}$. $\mathbf{M}^*$ is a full $C \times C$ matrix with no structural constraint.

Then, the Ghosted Layers operator is
$
    \mathbf{W}^{*}
    =
    \mathbf{I} + \mathbf{M}^{*}
    \in \mathbb{R}^{C \times C}.
$

\subsubsection{Inserting the Ghosted Layers operator.}
\label{sec:step3}
After removing $\mathcal{B}$ and re-indexing the surviving layers, we insert $\mathbf{W}^*$ at the output of layer $\ell^*\!-\!1$ via a forward hook. Given a hidden state $\mathbf{x} \in \mathbb{R}^{B \times T \times C}$, the module computes:
\begin{equation}
    \mathbf{x}_{\mathrm{new}}
    =
    \mathbf{x}\,\mathbf{W}^{*}
    =
    \underbrace{\mathbf{x}}_{\text{identity}}
    +
    \underbrace{\mathbf{x}\,\mathbf{M}^{*}}_{\text{additive}}\,,
    \label{eq:ghosted_forward}
\end{equation}
where the additive term $\mathbf{x}\,\mathbf{M}^*$ serves as the learned estimate of the boundary gap $\boldsymbol{\Delta}$: since $\mathbf{M}^*$ minimizes $\|\mathbf{X}_{\mathrm{pre}}\mathbf{M} - \boldsymbol{\Delta}\|_F^2$, the product $\mathbf{x}\,\mathbf{M}^*$ approximates the incremental update that the pruned layers would have contributed to $\mathbf{x}$. When $\mathbf{M}^* = \mathbf{0}$, Eq.~\eqref{eq:ghosted_forward} reduces to $\mathbf{x}_{\mathrm{new}} = \mathbf{x}$, recovering the pruned baseline exactly.

\textbf{Practical.}
We compute $\mathbf{M}^*$ by solving a regularized least-squares system via \texttt{torch.linalg.solve}, using $\epsilon = 10^{-6}$. In practice, this yields the same solution as the pseudoinverse formulation, as the regularization stabilizes the system when $T_{\mathcal{D}} \gg C$. For reproducibility, we provide complete implementation details, including both SVD-based and solver-based variants, in Appendix~\ref{app:solver_comparison}.

%% file: contents/4-analysis.tex
\label{sec:analysis}
We empirically analyze the properties of the unconstrained optimal operator $\mathbf{M}^{*}$ to validate the theoretical claims established in Section~\ref{sec:method}. All experiments in this section use $n{=}7$ pruned layers across two LLM backbones; detailed experimental settings are provided in Appendix~\ref{app:settings}.

Theorem~\ref{thm:main_main} establishes that $\mathbf{W}^{*}$ is the unconstrained minimizer of LinearPatch's own alignment objective, and that LinearPatch's symmetric parameterization prevents it from attaining this solution.

\begin{theorem}[Ghosted Layers is the Unconstrained Optimum of LinearPatch]
\label{thm:main_main}
Let $\mathbf{X}_{\mathrm{pre}}, \mathbf{X}_{\mathrm{post}} \in \mathbb{R}^{T_{\mathcal{D}} \times C}$ and $\boldsymbol{\Delta} = \mathbf{X}_{\mathrm{post}} - \mathbf{X}_{\mathrm{pre}}$. Both LinearPatch and Ghosted Layers produce a repaired activation of the form $\mathbf{X}_{\mathrm{new}} = \mathbf{X}_{\mathrm{pre}} \mathbf{W}$, but differ in the structure of $\mathbf{W}$:
\begin{align}
    \mathbf{X}_{\mathrm{new, LP}} = \mathbf{X}_{\mathrm{pre}} \cdot 
    \underbrace{\mathbf{H}\mathbf{D}\mathbf{H}^{\top}}_{\mathbf{W}_{\mathrm{LP}},\ \mathbf{W}_{\mathrm{LP}}^{\top} = \mathbf{W}_{\mathrm{LP}}} \quad 
    & \mathbf{X}_{\mathrm{new, GL}} = \mathbf{X}_{\mathrm{pre}} \cdot 
    \underbrace{(\mathbf{I} + \mathbf{M}^{*})}_{\mathbf{W}^{*},\ \mathbf{W}^{*} \in \mathbb{R}^{C \times C}}
\end{align}
where $\mathbf{M}^{*} = \mathbf{X}_{\mathrm{pre}}^{\dagger}\boldsymbol{\Delta}$, and $\mathbf{W}^{*} = \mathbf{I} + \mathbf{M}^{*}$ is the minimum-norm solution to the unconstrained alignment objective $\min_{\mathbf{W}} \|\mathbf{X}_{\mathrm{pre}}\mathbf{W} - \mathbf{X}_{\mathrm{post}}\|_F^2$, whereas $\mathbf{W}_{\mathrm{LP}}$ searches only over symmetric matrices, making LinearPatch a constrained approximation to Ghosted Layers.
\end{theorem}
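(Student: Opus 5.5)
The statement has three parts, and I would prove them in order.

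\textbf{Claim (i).} $\mathbf{W}^{*}=\mathbf{I}+\mathbf{X}_{\mathrm{pre}}^{\dagger}\boldsymbol{\Delta}$ is the minimum-norm minimizer of $\min_{\mathbf{W}}\|\mathbf{X}_{\mathrm{pre}}\mathbf{W}-\mathbf{X}_{\mathrm{post}}\|_F^2$.

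\textbf{Claim (ii).} Every $\mathbf{W}_{\mathrm{LP}}=\mathbf{H}\mathbf{D}\mathbf{H}^{\top}$ is symmetric, so LinearPatch optimizes the same objective over a strict linear subspace.

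\textbf{Claim (iii).} Consequently the LinearPatch objective value is never below that of $\mathbf{W}^{*}$. The inequality is strict whenever the unconstrained optimum has a nonzero anti-symmetric part that symmetric matrices cannot absorb.

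\textbf{Step 1 (claim (i)).} The Frobenius objective decouples over the $C$ columns of $\mathbf{W}$. Each column is an ordinary least-squares problem with design matrix $\mathbf{X}_{\mathrm{pre}}$.
\begin{itemize}
\item The normal equations are $\mathbf{X}_{\mathrm{pre}}^{\top}\mathbf{X}_{\mathrm{pre}}\mathbf{W}=\mathbf{X}_{\mathrm{pre}}^{\top}\mathbf{X}_{\mathrm{post}}$.
\item By the standard Moore--Penrose characterization, the minimum-norm solution is $\mathbf{X}_{\mathrm{pre}}^{\dagger}\mathbf{X}_{\mathrm{post}}$. Using the thin SVD, this equals $\mathbf{V}\boldsymbol{\Sigma}^{\dagger}\mathbf{U}^{\top}\mathbf{X}_{\mathrm{post}}$, as in Eq.~\eqref{eq:mstar_method}.
\item Substituting $\mathbf{X}_{\mathrm{post}}=\mathbf{X}_{\mathrm{pre}}+\boldsymbol{\Delta}$ gives $\mathbf{X}_{\mathrm{pre}}^{\dagger}\mathbf{X}_{\mathrm{post}}=\mathbf{X}_{\mathrm{pre}}^{\dagger}\mathbf{X}_{\mathrm{pre}}+\mathbf{M}^{*}$.
\end{itemize}
When $\mathbf{X}_{\mathrm{pre}}$ has full column rank (the regime $T_{\mathcal{D}}\gg C$ assumed in Section~\ref{sec:method}), we have $\mathbf{X}_{\mathrm{pre}}^{\dagger}\mathbf{X}_{\mathrm{pre}}=\mathbf{I}$, so the solution is exactly $\mathbf{I}+\mathbf{M}^{*}$. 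In that case the minimizer is unique, so "minimum-norm" is automatic.

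\textbf{Main obstacle (rank-deficient case).} Here $\mathbf{X}_{\mathrm{pre}}^{\dagger}\mathbf{X}_{\mathrm{pre}}=\mathbf{V}\mathbf{V}^{\top}\neq\mathbf{I}$. I would first show that $\mathbf{I}+\mathbf{M}^{*}$ is still a minimizer: it differs from $\mathbf{X}_{\mathrm{pre}}^{\dagger}\mathbf{X}_{\mathrm{post}}$ by $\mathbf{I}-\mathbf{V}\mathbf{V}^{\top}$, whose columns lie in $\ker\mathbf{X}_{\mathrm{pre}}$, so the objective value is unchanged. Among all minimizers it is the one closest to $\mathbf{I}$, i.e.\ it is minimum-norm in the reparameterization $\mathbf{M}$ of Eq.~\eqref{eq:M_obj}. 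I would therefore state the theorem's "minimum-norm" in the $\mathbf{M}$-parameterization and note that this agrees with the plain $\mathbf{W}$-version under full rank.

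\textbf{Step 2 (claim (ii)).} Direct computation gives $(\mathbf{H}\mathbf{D}\mathbf{H}^{\top})^{\top}=\mathbf{H}\mathbf{D}^{\top}\mathbf{H}^{\top}=\mathbf{H}\mathbf{D}\mathbf{H}^{\top}$, since $\mathbf{D}$ is diagonal. Hence the LinearPatch feasible set $\mathcal{S}_{\mathrm{LP}}$ lies inside $\mathrm{Sym}(C)$, which is a proper subspace of $\mathbb{R}^{C\times C}$ of dimension $C(C+1)/2$. With $\mathbf{H}$ fixed, $\mathcal{S}_{\mathrm{LP}}$ is even smaller: the $C$-dimensional space of matrices simultaneously diagonalized by $\mathbf{H}$.

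\textbf{Step 3 (claim (iii)).} Write $J(\mathbf{W})=\|\mathbf{X}_{\mathrm{pre}}\mathbf{W}-\mathbf{X}_{\mathrm{post}}\|_F^2$. Since the constrained set sits inside the unconstrained one, $\min_{\mathcal{S}_{\mathrm{LP}}}J\ge\min_{\mathrm{Sym}}J\ge J(\mathbf{W}^{*})$. This makes LinearPatch a constrained approximation to the same objective.

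For strictness, decompose $\mathbf{W}^{*}=\mathbf{S}+\mathbf{A}$ into symmetric and anti-symmetric parts. Assume full rank, so $\mathbf{G}=\mathbf{X}_{\mathrm{pre}}^{\top}\mathbf{X}_{\mathrm{pre}}\succ0$ and the minimizer is unique. Then for any symmetric $\mathbf{W}$,
\[
J(\mathbf{W})-J(\mathbf{W}^{*})=\|\mathbf{X}_{\mathrm{pre}}(\mathbf{W}-\mathbf{W}^{*})\|_F^2>0 .
\]
This is strictly positive because $\mathbf{W}-\mathbf{W}^{*}\neq\mathbf{0}$ whenever $\mathbf{A}\neq\mathbf{0}$. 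So the gap is strictly positive exactly when $\mathbf{W}^{*}$ is not symmetric. This matches the paper's later empirical observation of a substantial anti-symmetric component in $\mathbf{M}^{*}$ (note $\mathbf{A}$ is the anti-symmetric part of $\mathbf{M}^{*}$, since $\mathbf{I}$ is symmetric). The argument needs only the Pythagorean identity for least squares, which follows directly from the normal equations.
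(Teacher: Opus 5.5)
Your proposal is correct and follows essentially the paper's route: column-wise normal equations solved through the SVD pseudoinverse, then transposing $\mathbf{H}\mathbf{D}\mathbf{H}^{\top}$ directly and comparing the dimension $C(C+1)/2$ of the symmetric subspace against $C^2$. You are more careful than the paper in two places: you note that with rank-deficient $\mathbf{X}_{\mathrm{pre}}$ the matrix $\mathbf{I}+\mathbf{M}^{*}$ is minimum-norm only in the $\mathbf{M}$-parameterization (which is all the paper's reparameterize-first proof actually establishes, though it asserts it for $\mathbf{W}$), and your Pythagorean argument turns the paper's bare conclusion $\mathbf{W}_{\mathrm{LP}} \neq \mathbf{W}^{*}$ into a strict objective gap under full column rank.
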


Theorem~\ref{thm:main_main} thus positions LinearPatch as a constrained special case of the activation alignment framework: both methods optimize the same alignment objective over the same functional form $\mathbf{X}_{\mathrm{new}} = \mathbf{X}_{\mathrm{pre}}\mathbf{W}$, yet LinearPatch confines its search to the symmetric subspace of dimension $\frac{C(C+1)}{2}$, rendering $\mathbf{W}^{*}$ structurally inaccessible whenever it has a non-zero anti-symmetric component. The proof is deferred to Appendix~\ref{app:proof}.

\begin{figure}[h]
\centering
    \includegraphics[width=0.5\linewidth]{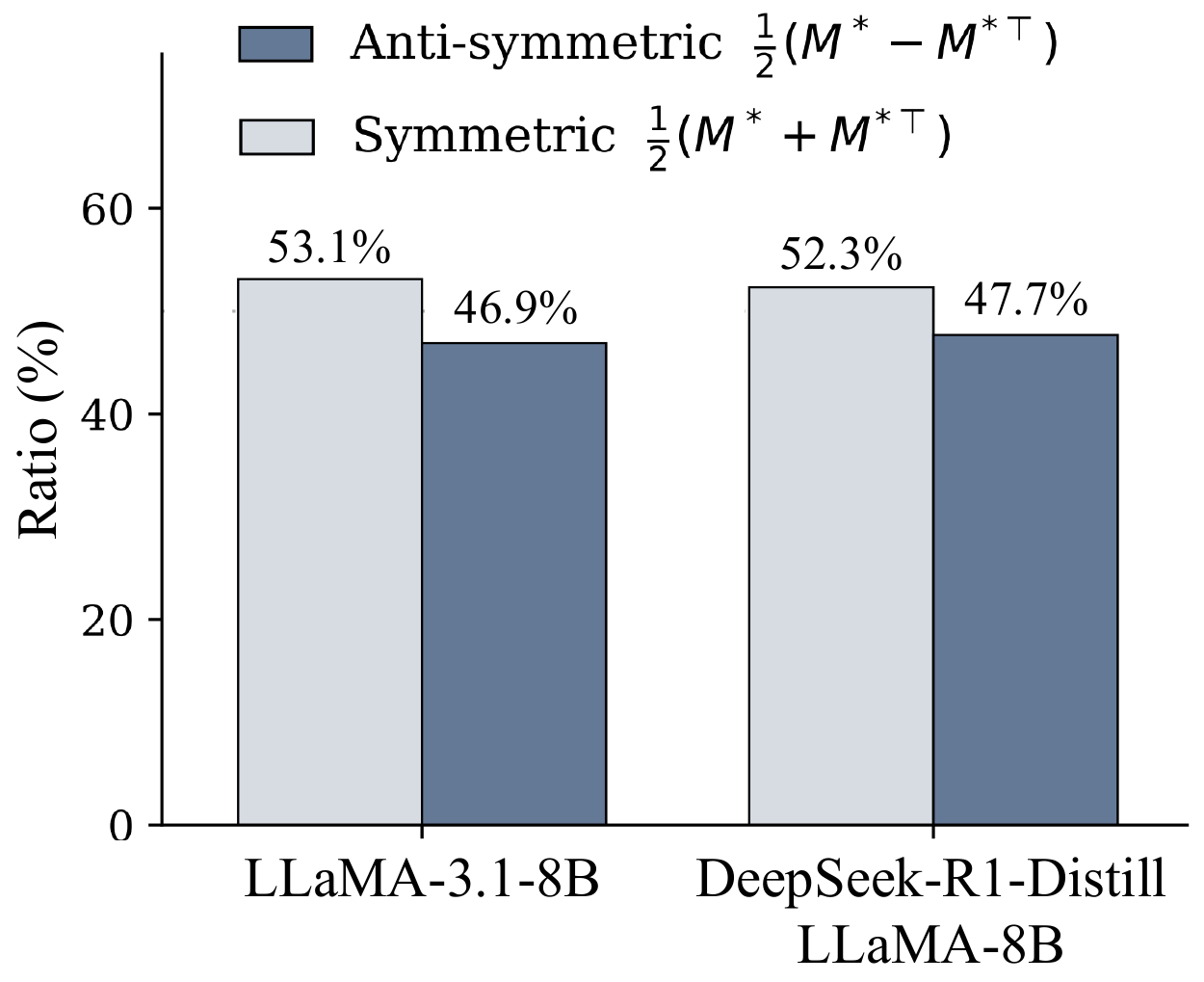}
    \caption{Frobenius norm decomposition of $\mathbf{M}^{*}$ into symmetric and anti-symmetric components across two LLM backbones ($n=7$, LLM-Streamline). Detailed setups are in Appendix~\ref{app:sym_decomp}}
    \label{fig:symm}
\end{figure}


\textbf{Constrained solution space.}
As established in Theorem~\ref{thm:main_main}, $\mathbf{W}^{*}$ is the unconstrained minimizer over all of $\mathbb{R}^{C \times C}$, whereas any symmetric operator $\mathbf{W}$ satisfies $\mathbf{W} - \mathbf{W}^{\top} = \mathbf{0}$ and is thus confined to the symmetric subspace. To empirically verify that $\mathbf{W}^{*}$ indeed lies outside this subspace, we compute $\mathbf{M}^{*}$ from the calibration activations $\mathbf{X}_{\mathrm{pre}}, \mathbf{X}_{\mathrm{post}}$ collected from the original unpruned model, and decompose $\mathbf{M}^{*}$ into its symmetric and anti-symmetric components:
\begin{equation}
    \mathbf{M}^{*} = 
    \underbrace{\frac{\mathbf{M}^{*} + (\mathbf{M}^{*})^{\top}}{2}}_{\mathbf{M}^{*}_{\mathrm{sym}}}
    +
    \underbrace{\frac{\mathbf{M}^{*} - (\mathbf{M}^{*})^{\top}}{2}}_{\mathbf{M}^{*}_{\mathrm{asym}}}.
    \label{eq:decomp}\notag
\end{equation}
Figure~\ref{fig:symm} shows that $\mathbf{M}^{*}_{\mathrm{asym}}$ is non-zero and comparable in magnitude to $\mathbf{M}^{*}_{\mathrm{sym}}$ consistently across all two backbones. Since any symmetric operator satisfies $\mathbf{M}_{\mathrm{asym}} = \mathbf{0}$ by construction, this anti-symmetric component is structurally inaccessible to LinearPatch regardless of how its diagonal $\mathbf{D}$ is chosen.

\begin{figure}[t]
\centering
\includegraphics[width=\linewidth]{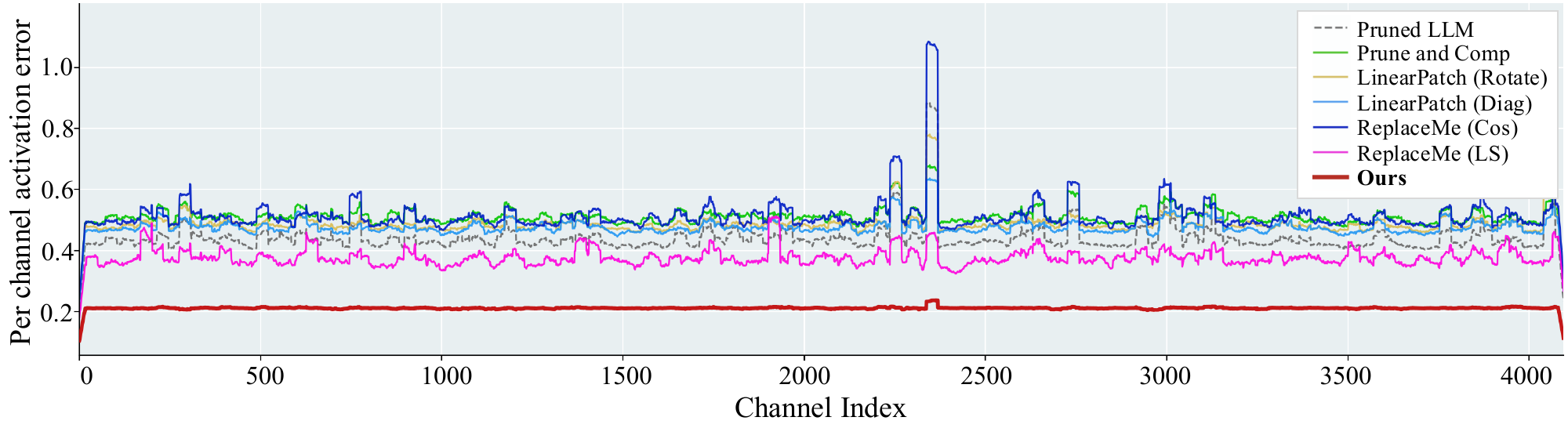}
\caption{
    Per-channel mean absolute error (MAE) between the repaired boundary activation $\mathbf{X}_{\mathrm{pre}}\mathbf{W}$ and the target post-boundary activation $\mathbf{X}_{\mathrm{post}}$ of the unpruned model, computed on LLaMA-3.1-8B with $n{=}7$ layers removed via LLM-Streamline, using 128 sequences of length 2{,}048 sampled from the C4 training split as calibration. Each curve shows the MAE averaged over tokens for each of the $C{=}4{,}096$ channels. The Pruned LLM curve corresponds to the LLM-Streamline-pruned model without any recovery operator applied.
  }
\label{fig:channel_mae}
\end{figure}

\textbf{Per-channel activation error.}
To examine how each method addresses the boundary activation mismatch along the channel dimension, we measure the per-channel MAE between the repaired activation $\mathbf{X}_{\mathrm{pre}}\mathbf{W}$ and the target $\mathbf{X}_{\mathrm{post}}$ of the unpruned model (Figure~\ref{fig:channel_mae}). Existing recovery methods largely track the pruned baseline across channels, and in some cases even exceed it, suggesting that their operators only indirectly affect the activation gap. Ghosted Layers, in contrast, directly targets the activation mismatch through its closed-form unconstrained operator, and consequently reduces the per-channel error substantially and uniformly across the hidden dimension.

\textbf{Inference cost equivalence with LinearPatch.}
Despite the structural gap established above, \texttt{LinearPatch} and \texttt{Ghosted Layers} incur the same inference cost at the boundary. As noted by LinearPatch~\citep{lp}, \texttt{LinearPatch} fuses its three factors $\mathbf{H}\mathbf{D}\mathbf{H}^{\top}$ offline into a single dense $C \times C$ matrix, so that the forward pass requires only one matrix multiplication ``rather than three distinct GEMM operations.'' Both operators therefore act as a single $C \times C$ matmul at the boundary:
\begin{equation}
    \mathbf{X}_{\mathrm{new, LP}} = \mathbf{X}_{\mathrm{pre}} \, \mathbf{P}, 
    \qquad
    \mathbf{X}_{\mathrm{new, GL}} = \mathbf{X}_{\mathrm{pre}} \, \mathbf{W}^{*},
    \qquad
    \mathbf{P}, \mathbf{W}^{*} \in \mathbb{R}^{C \times C}.
\end{equation}
Although $\mathbf{D}$ is parameterized by only $C$ free values, the fused $\mathbf{P} = \mathbf{H}\mathbf{D}\mathbf{H}^{\top}$ is a dense $C \times C$ matrix just like $\mathbf{W}^{*}$, with the same operator memory footprint and the same number of GEMM calls. \texttt{Ghosted Layers} thus attains the strictly richer solution space discussed above at no additional inference cost.

%% file: contents/4-results.tex
\label{sec:experiments}

\subsection{Experimental setups}
\setlength{\tabcolsep}{4.5pt}
\label{sec:settings}

\textbf{Benchmarks.}
We evaluate the performance of \textit{Ghosted Layers} on nine zero-shot commonsense reasoning benchmarks:
\texttt{ARC-Easy} and \texttt{ARC-Challenge}~\citep{clark2018arc},
\texttt{HellaSwag}~\citep{zellers2019hellaswag},
\texttt{WinoGrande}~\citep{sakaguchi2021winogrande},
\texttt{BoolQ}~\citep{clark2019boolq},
\texttt{OpenbookQA}~\citep{mihaylov2018openbookqa},
\texttt{RTE}~\citep{rte},
\texttt{COPA}~\citep{roemmele2011copa},
and \texttt{RACE}~\citep{lai2017race},
using the \texttt{lm-evaluation-harness} framework~\citep{gao2024lmeval}.
For perplexity, we evaluate on \texttt{WikiText-2}~\citep{wiki},
\texttt{C4}~\citep{raffel2020c4}, and \texttt{Penn Treebank}~\citep{marcus1993ptb}
using non-overlapping windows of length $T = 2{,}048$.

\textbf{Models.}
We evaluate \texttt{Ghosted Layers} on three open-source LLMs:
LLaMA-3-8B and LLaMA-3.1-8B~\citep{dubey2024llama3},
and DeepSeek-R1-Distill-LLaMA-8B~\citep{guo2025deepseekr1}.
All experiments are conducted on a single NVIDIA A40 48GB GPU.

\begin{table}[h]
\centering
\small
\caption{Official repositories of the pruning criteria and recovery methods used in our experiments.}
\vspace{0.5cm}
\label{tab:method_repos}
\begin{adjustbox}{width=0.8\textwidth}
\begin{tabular}{lll}
\toprule
Category & Method & Official repository \\
\midrule
\multirow{3}{*}{Pruning criterion}
& ShortGPT~\citep{men2024shortgpt}       & \url{https://github.com/sramshetty/ShortGPT} \\
& Shortened LLaMA~\citep{kim2024shortened} & \url{https://github.com/Nota-NetsPresso/shortened-llm} \\
& LLM-Streamline~\citep{chen2024streamline} & \url{https://github.com/ruckbreasoning/llm-streamline} \\
\midrule
\multirow{4}{*}{Recovery method}
& Prune\&Comp~\citep{chen2025prunecomp}    & N/A \\
& ReplaceMe~\citep{shopkhoev2025replaceme} & \url{https://github.com/mts-ai/ReplaceMe} \\
& LinearPatch~\citep{lp}  & \url{https://github.com/chenxinrui-tsinghua/LinearPatch} \\
& \texttt{Ghosted Layers} (Ours)           & Released upon acceptance \\
\bottomrule
\end{tabular}
\end{adjustbox}
\end{table}

\textbf{Layer pruning and recovery methods.}
We use three layer selection criteria: 
\texttt{LLM-Streamline}~\citep{chen2024streamline}, 
\texttt{ShortGPT}~\citep{men2024shortgpt}, 
and \texttt{Shortened LLaMA}~\citep{kim2024shortened}, 
all using their official implementations. 
For recovery, we compare the following training-free methods: 
\texttt{Prune\&Comp}~\citep{chen2025prunecomp}, 
\texttt{LinearPatch (Diag/Rotate)}~\citep{lp}, 
\texttt{ReplaceMe (LS/Cos)}~\citep{shopkhoev2025replaceme}, 
and \texttt{Ghosted Layers} (ours). 
All baselines use official implementations, except \texttt{Prune\&Comp}, which we reimplement from the paper as no public code is available. 
Our implementation builds on the \texttt{LinearPatch} codebase. 
For layer selection and operator estimation, we use 128 randomly sampled sequences of length $T = 2{,}048$ from the C4 training split~\citep{raffel2020c4}.

\begin{table*}[t]
\centering
\scriptsize
\caption{Zero-shot accuracy (\%) on commonsense QA benchmarks for 7-layer and 11-layer pruning across four LLM backbones. All methods are training-free. AVG denotes the mean accuracy across all nine tasks.  $L_p/L_t$ denotes the number of pruned layers $L_p$ over the total number of layers $L_t$ in the original model.}
\vspace{2pt}
\begin{adjustbox}{width=0.8\columnwidth}
\begin{tabular}{l  l  l c c c c c c c c c  r }
\toprule

\textbf{Model} & \textbf{$L_p/L_t$ } & \textbf{Method }&  \textbf{ARC-E}& \textbf{ARC-C }& \textbf{HellaS} & \textbf{WinoG} & \textbf{BoolQ}  & \textbf{OBQA} & \textbf{RTE} & \textbf{CoPa} & \textbf{Race} & \textbf{AVG} \textcolor{red}{$\uparrow$} \\
\midrule
\multirow{19}{*}{\begin{sideways}LLaMA-3-8B\end{sideways}}
& 0/32 & Dense          &  77.65 & 53.41 & 79.16 & 72.38 & 81.35 & 45.00 & 69.68 & 89.00 & 40.00 & 67.51 \\
\cmidrule{2-13}
& 7/32 & Shortened LLaMA & 58.84 & 32.68 & 59.16 & 53.75 & 45.38 & 34.40 & 54.15 & 75.00 & 30.72 & 49.34\\
& 7/32 & LLM-Streamline & 39.69 & 28.84 & 33.18 & 55.49 & 38.07 & 29.60 & 57.40 & 60.00 & 24.02 & 40.70 \\
& 7/32 & ShortGPT & 56.65 & 42.41 & 64.69 & 71.35 & 65.14 & 32.80 & 67.87 & 75.00 & 34.16 & 56.67 \\
& 7/32 & Prune and Comp & 42.97 & 29.69 & 41.64 & 58.88 & 52.84 & 33.80 & 62.09 & 70.00 & 27.08 & 46.55\\
&7/32 & ReplaceMe (Ls) & 63.13 & 43.86 & 64.07 & 72.85 & 71.31 & 38.00 & 68.59 & 77.00 & 36.27 & 59.45\\
&7/32 & ReplaceMe (Cos) & 52.10 & 35.67 & 52.38 & 66.69 & 39.24 & 34.60 & 63.90 & 67.00 & 30.05 & 49.07 \\
&7/32 & Linear Patch (Diag) & 43.86 & 31.74 & 44.14 & 60.85 & 61.25 & 33.60 & 65.34 & 68.00 & 29.09 & 48.65\\
&7/32 & Linear Patch (Rotate) & 50.51 & 34.56 & 49.90 & 63.77 & 57.49 & 33.40 & 67.15 & 68.00 & 30.14 & 50.55 \\
&7/32 & \jygl Ghost Layer (Ours) & \jygl 65.82 & \jygl 43.69 & \jygl 66.55 & \jygl 71.27 & \jygl 74.34 & \jygl 37.20 & \jygl 66.43 & \jygl 79.00 & \jygl 36.56 & \jygl  60.10\\
\cmidrule{2-13}
&11/32& Shortened LLaMA  & 48.65 & 29.27 & 49.81 & 51.78 & 61.65 & 30.60 & 50.90 & 71.00 & 28.23 & 46.88 \\
&11/32& LLM-Streamline & 38.17 & 29.69 & 33.04 & 56.67 & 56.15 & 30.20 & 70.04 & 57.00 & 27.18 & 44.24 \\
&11/32& ShortGPT & 38.17 & 29.69 & 33.04 & 56.67 & 56.15 & 30.20 & 70.04 & 57.00 & 27.18 & 44.24 \\
&11/32& Prune and Comp &  32.28 & 25.00 & 37.78 & 50.99 & 55.90 & 30.40 & 49.46 & 58.00 & 23.25 & 40.34\\
&11/32& ReplaceMe (Ls) &  42.93 & 33.87 & 47.30 & 67.40 & 75.75 & 32.40 & 64.62 & 69.00 & 31.67 & 51.66 \\
&11/32& ReplaceMe (Cos) & 42.68 & 33.45 & 38.27 & 58.56 & 61.90 & 29.20 & 62.09 & 60.00 & 30.05 & 46.24 \\
&11/32& Linear Patch (Diag) & 46.34 & 35.15 & 47.72 & 61.01 & 71.74 & 31.00 & 70.76 & 68.00 & 30.91 & 51.40\\
&11/32& Linear Patch (Rotate) & 48.15 & 34.56 & 45.04 & 61.40 & 75.47 & 31.60 & 66.79 & 68.00 & 31.29 & 51.37  \\
&11/32& \jygl Ghost Layer (Ours) & \jygl48.23 & \jygl34.56 & \jygl53.35 & \jygl69.77 & \jygl75.57 & \jygl32.40 & \jygl65.70 & \jygl71.00 & \jygl32.34 & \jygl53.66\\
\midrule
\multirow{19}{*}{\begin{sideways}LLaMA-3.1-8B\end{sideways}}
&0/32 & Dense          &  81.19 & 53.41 & 78.92 & 73.64 & 82.11 & 44.80 & 69.68 & 87.00 & 39.14 & 67.77\\
\cmidrule{2-13}
&7/32 & Shortened LLaMA & 61.32 & 33.11 & 59.55 & 54.22 & 43.76 & 35.20 & 51.62 & 77.00 & 31.29 & 49.67\\
&7/32 & LLM-Streamline & 44.19 & 33.11 & 33.39 & 56.99 & 38.20 & 32.60 & 58.12 & 61.00 & 25.84 & 42.60 \\
&7/32 & ShortGPT & 58.29 & 42.15 & 64.96 & 68.35 & 61.99 & 34.40 & 69.68 & 80.00 & 34.45 & 57.14 \\
&7/32 & Prune and Comp & 46.25 & 30.89 & 44.22 & 59.12 & 53.91 & 35.00 & 60.29 & 67.00 & 28.61 & 47.25 \\
&7/32 & ReplaceMe (Ls) &  64.90 & 43.52 & 63.80 & 71.67 & 69.60 & 37.80 & 71.48 & 77.00 & 37.32 & 59.68\\
&7/32 & ReplaceMe (Cos) &  57.03 & 37.29 & 52.14 & 64.25 & 39.36 & 35.80 & 62.82 & 68.00 & 31.10 & 49.75 \\
&7/32 & Linear Patch (Diag) & 48.36 & 32.68 & 47.36 & 61.56 & 63.70 & 35.00 & 68.23 & 68.00 & 29.28 & 50.46 \\
&7/32 & Linear Patch (Rotate) & 57.62 & 37.29 & 54.91 & 65.43 & 60.49 & 35.80 & 69.68 & 69.00 & 29.19 & 53.27 \\
&7/32 & \jygl Ghost Layer (Ours) & \jygl 68.01 & \jygl 43.00 & \jygl 66.60 & \jygl 71.59 & \jygl 71.31 & \jygl 37.20 & \jygl 68.59 & \jygl 76.00 & \jygl 37.80 & \jygl 60.01  \\
\cmidrule{2-13}
&11/32& Shortened LLaMA  & 38.54 & 30.38 & 29.05 & 50.91 & 59.80 & 28.20 & 50.26 & 62.00 & 29.38 & 42.05 \\
&11/32& LLM-Streamline &  39.69 & 30.29 & 31.49 & 56.12 & 55.11 & 30.40 & 69.68 & 61.00 & 28.33 & 44.68 \\
&11/32& ShortGPT &  39.69 & 30.29 & 31.49 & 56.12 & 55.11 & 30.40 & 69.68 & 61.00 & 28.33 & 44.68 \\
&11/32& Prune and Comp & 38.51 & 27.30 & 40.50 & 53.28 & 62.08 & 29.40 & 53.79 & 57.00 & 25.17 & 43.00 \\
&11/32& ReplaceMe (Ls) &  44.53 & 34.04 & 47.23 & 67.40 & 73.55 & 29.80 & 70.76 & 67.00 & 29.95 & 51.58\\
&11/32& ReplaceMe (Cos) & 43.35 & 32.59 & 37.02 & 56.75 & 58.13 & 29.60 & 68.23 & 62.00 & 31.96 & 46.63 \\
&11/32& Linear Patch (Diag) & 50.67 & 36.60 & 48.55 & 62.51 & 71.31 & 31.00 & 71.84 & 68.00 & 31.00 & 52.39 \\
&11/32& Linear Patch (Rotate) &  50.93 & 34.98 & 47.30 & 62.19 & 74.65 & 30.40 & 71.48 & 68.00 & 31.58 & 52.39\\
&11/32& \jygl Ghost Layer (Ours) & \jygl50.08 & \jygl34.64 & \jygl52.73 & \jygl69.30 & \jygl72.63 & \jygl31.80 & \jygl69.68 & \jygl72.00 & \jygl32.44 & \jygl53.92\\
\midrule
\multirow{19}{*}{\begin{sideways}DeepSeek-R1-Distill-LLaMA-8B\end{sideways}}
&0/32& Dense & 65.87 & 42.41 & 74.35 & 67.80 & 82.91 & 41.20 & 69.68 & 89.00 & 41.63 & 63.87\\
\cmidrule{2-13}
&7/32& Shortened LLaMA & 45.71 & 29.69 & 55.66 & 57.70 & 64.65 & 30.40 & 50.54 & 76.00 & 30.62 & 49.00\\
&7/32& LLM-Streamline & 49.92 & 35.24 & 46.33 & 61.56 & 51.99 & 32.40 & 57.40 & 70.00 & 27.27 & 48.01\\
&7/32& ShortGPT & 49.12 & 37.03 & 55.95 & 63.06 & 77.09 & 34.00 & 74.73 & 71.00 & 33.21 & 55.02\\
&7/32& Prune and Comp & 44.99 & 31.06 & 43.44 & 56.12 & 53.12 & 31.80 & 60.65 & 64.00 & 26.79 & 45.77 \\
&7/32& ReplaceMe (Ls) & 55.35 & 37.03 & 59.97 & 66.46 & 69.08 & 35.20 & 75.09 & 80.00 & 37.80 & 57.33\\
&7/32& ReplaceMe (Cos) &  51.09 & 36.09 & 53.48 & 64.17 & 58.01 & 36.00 & 58.84 & 74.00 & 33.49 & 51.69 \\
&7/32& Linear Patch (Diag) & 51.09 & 35.32 & 49.59 & 62.27 & 62.78 & 34.60 & 66.43 & 73.00 & 30.05 & 51.68 \\
&7/32& Linear Patch (Rotate) & 54.08 & 36.18 & 53.29 & 64.25 & 72.97 & 33.80 & 62.82 & 72.00 & 32.92 & 53.59 \\
&7/32& \jygl Ghost Layer (Ours) & \jygl55.81 & \jygl37.03 & \jygl61.30 & \jygl67.25 & \jygl68.47 & \jygl37.00 & \jygl72.92 & \jygl81.00 & \jygl39.43 & \jygl57.80\\
\cmidrule{2-13}
&11/32& Shortened LLaMA  &  41.75 & 28.84 & 42.84 & 54.30 & 55.23 & 27.80 & 53.79 & 70.00 & 26.99 & 44.62
\\
&11/32& LLM-Streamline & 37.08 & 31.66 & 38.72 & 55.17 & 75.20 & 27.40 & 64.26 & 63.00 & 26.22 & 46.52 \\
&11/32& ShortGPT &  37.08 & 31.66 & 38.72 & 55.17 & 75.20 & 27.40 & 64.26 & 63.00 & 26.22 & 46.52 \\
&11/32& Prune and Comp & 36.45 & 29.10 & 36.32 & 50.99 & 64.62 & 27.40 & 60.29 & 56.00 & 24.40 & 42.84 \\
&11/32& ReplaceMe (Ls) &  40.19 & 30.63 & 44.70 & 61.40 & 66.45 & 32.00 & 74.73 & 65.00 & 33.78 & 49.88\\
&11/32& ReplaceMe (Cos) &  38.05 & 30.55 & 40.74 & 54.93 & 77.52 & 28.60 & 67.51 & 65.00 & 30.81 & 48.19\\
&11/32& Linear Patch (Diag) & 45.58 & 34.90 & 43.97 & 58.33 & 77.46 & 31.60 & 68.23 & 64.00 & 29.57 & 50.40 \\
&11/32& Linear Patch (Rotate) & 43.69 & 32.76 & 44.22 & 59.19 & 77.34 & 30.40 & 70.04 & 65.00 & 30.72 & 50.37 \\
&11/32& \jygl Ghost Layer (Ours) & \jygl42.72 & \jygl32.59 & \jygl48.09 & \jygl64.33 & \jygl75.41 & \jygl31.80 & \jygl74.01 & \jygl66.00 & \jygl34.07 & \jygl52.11\\
\bottomrule
\end{tabular}
\end{adjustbox}
\label{tab:commonsense_main}
\end{table*}

\setlength{\tabcolsep}{5.8pt}
\begin{table*}[t]
\centering
\footnotesize
\caption{Comparison on PPL benchmark with training-free methods over three LLMs.  $L_p/L_t$ denotes the number of pruned layers $L_p$ over the total number of layers $L_t$ in the original model.}
\vspace{2pt}
\label{tab:ppl_main}
\begin{adjustbox}{width=0.8\columnwidth}
\begin{tabular}{l l r r r r r r r r r r r}
\toprule
&& \multicolumn{5}{c}{\textbf{7-layer}} && \multicolumn{5}{c}{\textbf{11-Layer}}\\
\cmidrule{3-7} \cmidrule{9-13}
\textbf{Model}  &\textbf{Method}  & \textbf{WIKI}\textcolor{red}{$\downarrow$} &  \textbf{C4}\textcolor{red}{$\downarrow$} & \textbf{PTB}\textcolor{red}{$\downarrow$} & \textbf{PPL AVG}\textcolor{red}{$\downarrow$} & \textbf{ACC AVG}\textcolor{red}{$\uparrow$}   &&   \textbf{WIKI}\textcolor{red}{$\downarrow$} & \textbf{C4}\textcolor{red}{$\downarrow$} & \textbf{PTB}\textcolor{red}{$\downarrow$} & \textbf{PPL AVG}\textcolor{red}{$\downarrow$} & \textbf{ACC AVG}\textcolor{red}{$\uparrow$}   \\
\midrule
\multirow{22}{*}{\begin{sideways}LLaMA-3-8B\end{sideways}}
& Dense & 5.47 & 6.71 & 22.51 & 11.56 & 67.51 && 5.47 & 6.71 & 22.51 & 11.56 & 67.51\\
\cmidrule{2-13}
& \textbf{Shortened LLaMA}  & 15.09 & 18.70 & 22.08 & 18.62 & 49.39 && 59.14 & 44.15 & 64.05 & 55.78 & 46.88 \\
& + Prune and Comp & 12.67 & 17.62 & 20.30 & 16.86 & 49.74 && 29.55 & 31.87 & 41.91 & 34.44 & 45.50 \\
& + ReplaceMe (Ls) & 68.72 & 58.23 & 121.37 & 82.77 & 43.52 && 203.92 & 180.54 & 289.82 & 224.76 & 37.40 \\
& + ReplaceMe (Cos) & 14.74 & 18.49 & 21.98 & 18.40 & 49.24 && 55.58 & 43.03 & 63.20 & 53.94 & 45.94 \\
& + Linear Patch (Diag)  & 12.27 & 17.42 & 20.35 & 16.68 & 50.13 && 23.91 & 28.38 & 37.14 & 29.81 & 47.33 \\
& + Linear Patch (Rotate)  & 12.35 & 17.20 & 20.27 & 16.61 & 49.39 && 25.34 & 29.02 & 37.78 & 30.71 & 46.73 \\
& \jygl + Ghost Layer (Ours)  & \jygl 10.74 & \jygl 15.32 & \jygl 18.85 & \jygl 14.97 & \jygl 53.44 & \jygl& \jygl 18.95 & \jygl 23.65 & \jygl 37.45 & \jygl 26.68 & \jygl 47.84 \\
\cmidrule{2-13}
& \textbf{LLM-Streamline}    & 2305.48 & 2106.64 & 4642.40 & 3018.17 & 40.70 && 5594.89 & 4306.90 & 4481.35 & 4794.38 & 44.24 \\
& + Prune and Comp & 172.95 & 248.38 & 237.30 & 219.54 & 46.55 && 449.61 & 375.50 & 745.48 & 523.53 & 40.34 \\
& + ReplaceMe (Ls) & 30.02 & 26.24 & 52.83 & 36.36 & 59.45 && 133.34 & 78.58 & 380.05 & 197.32 & 51.66 \\
& + ReplaceMe (Cos) & 121.41 & 153.02 & 203.20 & 159.21 & 49.07 && 2158.67 & 1189.44 & 2896.85 & 2081.65 & 46.24 \\
& + Linear Patch (Diag) & 135.41 & 187.55 & 192.71 & 171.89 & 48.65 && 222.33 & 209.04 & 431.53 & 287.63 & 51.40 \\
& + Linear Patch (Rotate) & 79.08 & 110.39 & 92.50 & 93.99 & 50.55 && 272.37 & 224.54 & 470.82 & 322.58 & 51.37 \\
& \jygl + Ghost Layer (Ours) & \jygl 21.31 & \jygl 21.62 & \jygl 40.64 & \jygl 27.86 & \jygl 60.10 & \jygl& \jygl 55.40 & \jygl 41.55 & \jygl 133.33 & \jygl 76.76 & \jygl 53.66 \\
\cmidrule{2-13}
\cmidrule{2-13}
& \textbf{ShortGPT} & 57.79 & 61.79 & 67.22 & 62.27 & 56.65 && 5594.89 & 4306.90 & 5481.35 & 5127.71 & 44.26 \\
& + Prune and Comp & 31.20 & 41.84 & 49.95 & 41.00 & 54.66 && 644.80 & 495.67 & 1033.65 & 724.71 & 38.84 \\
& + ReplaceMe (Ls) & 778.00 & 452.34 & 273.75 & 501.36 & 35.89 && 407.00 & 265.00 & 453.00 & 375.00 & 38.33 \\
& + ReplaceMe (Cos) & 79.65 & 66.60 & 151.45 & 99.23 & 56.93 && 784.38 & 889.44 & 520.97 & 731.60 & 39.52 \\
& + Linear Patch (Diag) & 28.40 & 38.21 & 38.20 & 34.94 & 58.12 && 207.20 & 195.15 & 410.81 & 271.05 & 51.57 \\
& + Linear Patch (Rotate)   & 35.30 & 40.46 & 43.33 & 39.70 & 58.77 && 439.50 & 403.85 & 950.21 & 597.85 & 49.71 \\
& \jygl + Ghost Layer (Ours)  & \jygl 15.08 & \jygl 19.40 & \jygl 28.16 & \jygl 20.88 & \jygl 59.33 & \jygl& \jygl 57.13 & \jygl 44.14 & \jygl 161.57 & \jygl 87.61 & \jygl 53.52\\
\midrule
\multirow{22}{*}{\begin{sideways}LLaMA-3.1-8B\end{sideways}}
& Dense & 6.24 & 8.68 & 10.58 & 8.50 & 67.77 && 6.24 & 8.68 & 10.58 & 8.50 & 67.77 \\
\cmidrule{2-13}
& \textbf{Shortened LLaMA}  & 14.54 & 18.44 & 22.82 & 18.60 & 50.16 && 54.14 & 47.15 & 152.18 & 84.49 & 42.05 \\
& + Prune and Comp & 29.41 & 40.46 & 44.26 & 38.04 & 42.60 && 182.71 & 119.57 & 253.54 & 185.27 & 41.22 \\
& + ReplaceMe (Ls) & 70.03 & 55.41 & 134.37 & 86.60 & 43.84 && 1176.61 & 444.62 & 1827.31 & 1149.51 & 38.85 \\
& + ReplaceMe (Cos) & 14.40 & 18.35 & 22.68 & 18.48 & 49.79 && 46.80 & 38.71 & 58.73 & 48.08 & 42.05 \\
& + Linear Patch (Diag)  & 12.45 & 17.50 & 20.35 & 16.77 & 41.38 && 224.29 & 136.05 & 314.63 & 224.99 & 40.84 \\
& + Linear Patch (Rotate)  & 12.46 & 17.17 & 20.27 & 16.63 & 43.75 && 232.96 & 180.62 & 343.98 & 252.52 & 41.41 \\
& \jygl + Ghost Layer (Ours)  & \jygl 10.79 & \jygl 15.26 & \jygl 18.26 & \jygl 14.77 & \jygl 53.55 & \jygl & \jygl 78.81 & \jygl 64.56 & \jygl 196.16 & \jygl 113.18 & \jygl 42.31 \\
\cmidrule{2-13}
& \textbf{LLM-Streamline}    & 2301.46 & 1173.53 & 3720.23 & 2398.41 & 42.60 && 4799.22 & 6510.32 & 6173.75 & 5827.76 & 44.68 \\
& + Prune and Comp & 157.93 & 191.65 & 207.59 & 185.72 & 47.25 && 543.58 & 404.81 & 841.99 & 596.79 & 43.00 \\
& + ReplaceMe (Ls) & 29.68 & 26.27 & 51.11 & 35.69 & 59.68 && 133.21 & 78.68 & 400.33 & 204.07 & 51.58 \\
& + ReplaceMe (Cos) & 147.97 & 131.94 & 195.69 & 158.53 & 49.75 && 1469.20 & 1487.28 & 1817.19 & 1591.22 & 46.63 \\
& + Linear Patch (Diag) & 110.93 & 134.68 & 135.50 & 127.04 & 50.46 && 224.32 & 195.13 & 391.71 & 270.39 & 52.39 \\
& + Linear Patch (Rotate) & 57.33 & 73.48 & 67.38 & 66.06 & 53.27 && 232.96 & 180.62 & 343.98 & 252.52 & 52.39 \\
& \jygl + Ghost Layer (Ours) & \jygl 21.35 & \jygl 21.52 & \jygl 40.56 & \jygl 27.81 & \jygl 60.01 & \jygl & \jygl 54.98 & \jygl 41.31 & \jygl 125.73 & \jygl 74.01 & \jygl 53.92 \\
\cmidrule{2-13}
& \textbf{ShortGPT} & 63.42 & 69.64 & 68.58 & 67.21 & 57.10 && 3799.22 & 2510.32 & 4173.75 & 3494.43 & 44.71 \\
& + Prune and Comp & 29.41 & 40.46 & 44.26 & 38.04 & 55.91 && 669.00 & 505.01 & 998.22 & 724.08 & 40.85 \\
& + ReplaceMe (Ls) & 519.94 & 318.59 & 137.12 & 325.22 & 36.47 && 1253.00 & 592.00 & 286.00 & 710.33 & 37.48 \\
& + ReplaceMe (Cos) & 64.41 & 67.57 & 103.02 & 78.33 & 56.43 && 423.14 & 538.34 & 425.84 & 462.44 & 39.55 \\
& + Linear Patch (Diag) & 27.03 & 36.95 & 34.14 & 32.71 & 59.12 && 208.68 & 182.72 & 372.63 & 254.68 & 52.58 \\
& + Linear Patch (Rotate)   & 33.16 & 39.51 & 39.27 & 37.31 & 58.90 && 431.44 & 351.86 & 741.12 & 508.14 & 50.93 \\
& \jygl + Ghost Layer (Ours)  & \jygl 15.72 & \jygl 19.73 & \jygl 27.01 & \jygl 20.82 & \jygl 60.45 & \jygl & \jygl 61.14 & \jygl 50.74 & \jygl 116.81 & \jygl 76.23 & \jygl 54.02 \\
\midrule
\multirow{22}{*}{\begin{sideways}DeepSeek-R1-Distill-LLaMA-8B\end{sideways}}
& Dense & 13.13 & 19.46 & 22.28 & 18.29 & 63.87 && 13.13 & 19.46 & 22.28 & 18.29 & 63.87\\
\cmidrule{2-13}
& \textbf{Shortened LLaMA}  & 29.26 & 37.19 & 45.26 & 37.24 & 49.00 && 94.06 & 87.80 & 127.20 & 103.02 & 44.62 \\
& + Prune and Comp & 26.82 & 34.22 & 42.28 & 34.44 & 49.29 && 387.30 & 198.67 & 399.84 & 328.60 & 40.08 \\
& + ReplaceMe (Ls) & 54.69 & 58.63 & 78.63 & 63.98 & 45.94 && 947.48 & 420.19 & 895.78 & 754.48 & 38.16 \\
& + ReplaceMe (Cos) & 27.60 & 35.12 & 42.89 & 35.20 & 49.75 && 95.91 & 84.17 & 134.11 & 104.73 & 45.21 \\
& + Linear Patch (Diag)  & 23.09 & 30.32 & 35.74 & 29.72 & 51.21 && 45.64 & 50.10 & 66.09 & 53.94 & 46.06\\
& + Linear Patch (Rotate)  & 23.66 & 30.84 & 36.32 & 30.27 & 51.69 && 45.08 & 48.64 & 63.73 & 52.48 & 45.90 \\
& \jygl + Ghost Layer (Ours)  & \jygl 20.34 & \jygl 26.77 & \jygl 31.72 & \jygl 26.28 & \jygl 52.82 & \jygl & \jygl 33.66 & \jygl 39.54 & \jygl 46.21 & \jygl 39.80 & \jygl 47.25 \\
\cmidrule{2-13}
& \textbf{LLM-Streamline}    & 3083.95 & 1291.64 & 2985.68 & 2453.76 & 48.01 && 5094.57 & 4171.75 & 4114.49 & 4460.27 & 46.52 \\
& + Prune and Comp & 805.43 & 521.87 & 1243.46 & 856.92 & 45.77 && 3061.56 & 1009.17 & 4305.52 & 2792.08 & 42.84 \\
& + ReplaceMe (Ls) & 65.41 & 48.49 & 101.53 & 71.81 & 57.33 && 275.22 & 138.16 & 485.00 & 299.46 & 49.88 \\
& + ReplaceMe (Cos) & 441.74 & 312.40 & 578.42 & 444.19 & 51.69 && 4059.05 & 2408.55 & 8476.96 & 4981.52 & 48.19 \\
& + Linear Patch (Diag) & 476.26 & 341.89 & 596.51 & 471.55 & 51.68 && 1512.90 & 585.54 & 1998.35 & 1365.60 & 50.40 \\
& + Linear Patch (Rotate) & 319.22 & 212.18 & 459.40 & 330.27 & 53.59 && 2070.12 & 536.66 & 2752.07 & 1786.28 & 50.37 \\
& \jygl + Ghost Layer (Ours) & \jygl 45.27 & \jygl 38.69 & \jygl 59.37 & \jygl 47.78 & \jygl 57.80 & \jygl & \jygl 115.18 & \jygl 75.07 & \jygl 181.76 & \jygl 124.00 & \jygl 52.11 \\
\cmidrule{2-13}
& \textbf{ShortGPT} & 343.44 & 157.21 & 906.19 & 468.95 & 55.05 && 4911.51 & 4261.16 & 3994.34 & 4389.00 & 46.54 \\
& + Prune and Comp & 131.62 & 86.37 & 277.45 & 165.15 & 51.13 && 2901.15 & 1443.53 & 3941.85 & 2762.18 & 42.68  \\
& + ReplaceMe (Ls) & 1225.19 & 793.62 & 3787.38 & 1935.40 & 42.78 && 4096.38 & 5405.75 & 4597.50 & 4699.88 & 38.89 \\
& + ReplaceMe (Cos) & 579.23 & 171.66 & 3628.31 & 1459.73 & 54.84 && 4505.67 & 4341.19 & 3178.56 & 4008.47 & 44.52 \\
& + Linear Patch (Diag) & 88.29 & 69.16 & 160.98 & 106.14 & 57.11 && 1502.95 & 577.91 & 1994.45 & 1358.44 & 50.37 \\
& + Linear Patch (Rotate) & 110.34 & 66.54 & 211.78 & 129.55 & 56.89 && 3654.27 & 847.01 & 4649.31 & 3050.20 & 49.45 \\
& \jygl + Ghost Layer (Ours)  & \jygl 30.30 & \jygl 33.71 & \jygl 41.29 & \jygl 35.10 & \jygl 57.87 & \jygl& \jygl 131.41 & \jygl 81.61 & \jygl 201.41 & \jygl 138.14 & \jygl 52.31 \\
\bottomrule
\end{tabular}
\end{adjustbox}
\end{table*}

\subsection{Numerical results}
\textbf{Results on QA Benchmarks.}
Table~\ref{tab:commonsense_main} reports zero-shot accuracy on nine commonsense QA benchmarks for 7-layer and 11-layer pruning across three LLM backbones, with LLM-Streamline as the pruning criterion. 
Across both pruning ratios and all three backbones, \texttt{Ghosted Layers} attains the highest or competitive average accuracy among training-free recovery methods. 
Results on additional backbones (LLaMA-2-7B~\cite{touvron2023llama2}, OLMo-2-7B~\cite{olmo2}, Qwen3-14B~\cite{qwen3}) are provided in Appendix~\ref{app:llms}.

\textbf{Results on PPL Benchmarks.}
Table~\ref{tab:ppl_main} reports perplexity on WikiText-2, C4, and Penn Treebank across three backbones and three pruning criteria. Among these, LLM-Streamline removes a contiguous block of layers, whereas ShortGPT and Shortened LLaMA prune layers non-contiguously. Some pruned models exhibit markedly elevated perplexity, a behavior consistent with observations reported in prior work~\cite{lp}. Across all three pruning criteria, \texttt{Ghosted Layers} achieves the lowest average perplexity in most combinations, and the margin widens under the more aggressive 11-layer setting.

\subsection{Ablation on size of calibration set}
\label{app:csize}
\input{contents/appendix/a-exp0}

%% file: contents/appendix/a-exp0.tex
We study the effect of the calibration set size on the performance of \texttt{Ghosted Layers} by varying the number of sequences used to estimate $\mathbf{W}^{*}$. All sequences are sampled from C4 with length $T{=}2{,}048$, and we evaluate perplexity on WikiText-2, C4, and PTB under 7-layer pruning of LLaMA-3.1-8B with the LLM-Streamline criterion. As shown in Table~\ref{tab:csize_ablation}, accuracy saturates almost immediately: $32$ sequences already reach $60.01$ AVG accuracy, and further increases to $64$ or $128$ yield essentially no gain ($59.99$ and $60.04$). 

\begin{table}[h]
\centering
\scriptsize
\caption{Ablation on the number of calibration sequences used to estimate $\mathbf{W}^{*}$, evaluated on LLaMA-3.1-8B with 7 out of 32 layers pruned under the LLM-Streamline criterion. Perplexity ($\downarrow$) is reported on WikiText-2, C4, and PTB; PPL AVG is the mean across the three. Acc AVG ($\uparrow$) denotes the mean zero-shot accuracy across the nine commonsense reasoning benchmarks used in our main experiments.}
\vspace{0.2cm}
\label{tab:csize_ablation}
\begin{tabular}{lccccc}
\toprule
Num.\ of sequences & WikiText-2 & C4 & PTB & PPL AVG $\downarrow$ & Acc AVG $\uparrow$ \\
\midrule
16   & 23.92 & 23.24 & 43.41 & 30.19 & 59.31 \\
\rowcolor{gray!20}
\textbf{32}   & \textbf{21.35} & \textbf{21.52} & \textbf{40.56} & \textbf{27.81} & \textbf{60.01} \\
64   & 20.23 & 21.52 & 39.94 & 27.23 & 59.99 \\
128  & 19.68 & 21.21 & 36.77 & 25.89 & 60.04 \\
\bottomrule
\end{tabular}
\end{table}

Perplexity continues to improve modestly with more calibration data, but the marginal returns diminish sharply beyond $32$. We adopt $32$ sequences as the default since this is the smallest size at which downstream accuracy is already saturated, and the additional perplexity reduction from larger calibration sets does not translate into accuracy gains.

%% file: contents/5-discussion.tex
\setlength{\tabcolsep}{3.5pt}
\begin{table*}[t]
\centering
\scriptsize
\caption{Inference cost, accuracy, and perplexity on LLaMA-3.1-8B under 7/32 and 11/32 layer pruning. GPU memory is reported as the peak activated tensor footprint during the forward pass measured via \texttt{torch.cuda.max\_memory\_allocated()}, normalized to the dense baseline. Latency is the mean prefill time at sequence length 2{,}048 over 10 runs with 3 warmup iterations on the same input.}
\vspace{3pt}
\begin{tabular}{l r r  r r  r r}
\toprule
\textbf{Method} & \textbf{$L_p/L_t$} & \textbf{GPU} (\%) & \textbf{Latency (ms)} & \textbf{Speedup}\textcolor{red}{$\uparrow$} & \textbf{ACC AVG}\textcolor{red}{$\uparrow$} & \textbf{PPL AVG}\textcolor{red}{$\downarrow$} \\
\midrule
Dense                        & 0/32  & 100.0 & $362.6$ & $1.00\times$ & 67.77 & 8.50     \\
\midrule
LLM-Streamline                   & 7/32  & 81.6  & $287.3$ & $1.26\times$ & 42.60 & 2398.41  \\
+ Prune and Comp             & 7/32  & 81.6  & $288.2$ & $1.26\times$ & 47.25 & 185.72   \\
+ ReplaceMe (LS)             & 7/32  & 81.6  & $288.3$ & $1.26\times$ & 59.68 & 35.69    \\
+ ReplaceMe (Cos)            & 7/32  & 81.6  & $287.9$ & $1.26\times$ & 49.75 & 158.53   \\
+ Linear Patch (D)           & 7/32  & 82.0  & $291.4$ & $1.24\times$ & 50.46 & 127.04   \\
+ Linear Patch (R)           & 7/32  & 82.0  & $291.7$ & $1.24\times$ & 53.27 & 66.06    \\
\rowcolor{gray!15}
\jygl + Ghost Layer (Ours)   & \jygl 7/32 & \jygl 82.0 & \jygl $291.9$ & \jygl $1.24\times$ & \jygl 60.01 & \jygl 27.81  \\
\midrule
LLM-Streamline                   & 11/32 & 71.1  & $244.4$ & $1.48\times$ & 44.68 & 5827.76 \\
+ Prune and Comp             & 11/32 & 71.1  & $244.9$ & $1.48\times$ & 43.00 & 596.79   \\
+ ReplaceMe (LS)             & 11/32 & 71.1  & $244.8$ & $1.48\times$ & 51.58 & 204.07   \\
+ ReplaceMe (Cos)            & 11/32 & 71.1  & $245.2$ & $1.48\times$ & 46.63 & 1591.22  \\
+ Linear Patch (D)           & 11/32 & 71.5  & $248.1$ & $1.46\times$ & 52.39 & 270.39   \\
+ Linear Patch (R)           & 11/32 & 71.5  & $248.1$ & $1.46\times$ & 52.39 & 252.52   \\
\rowcolor{gray!15}
\jygl + Ghost Layer (Ours)   & \jygl 11/32 & \jygl 71.5 & \jygl $247.6$ & \jygl $1.46\times$ & \jygl 53.92 & \jygl 74.01 \\
\bottomrule
\end{tabular}

\label{tab:efficiency_llama31}
\end{table*}

\begin{wrapfigure}{r}{0.45\textwidth}
    \centering
    \vspace{-0.5cm}
    \includegraphics[width=\linewidth]{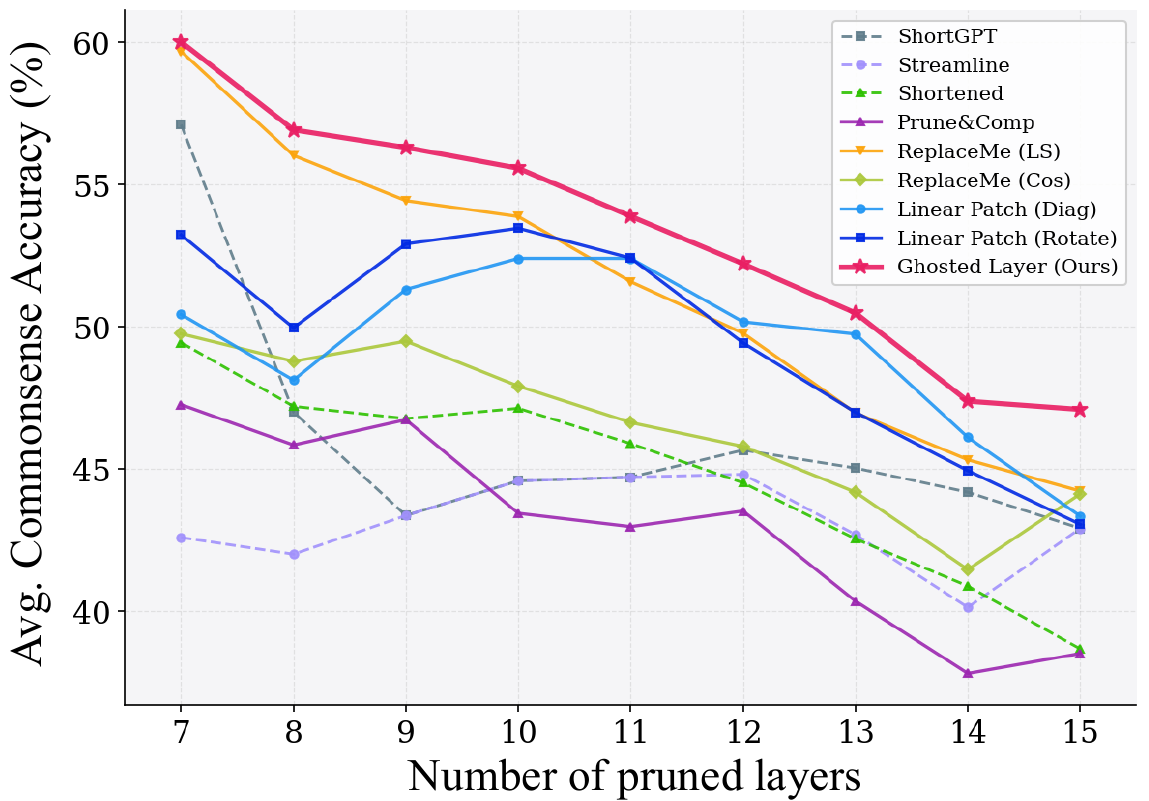}
      \caption{Average accuracy across 9 commonsense reasoning benchmarks with LLaMA-3.1-8B}
      \label{fig:diss}
    \vspace{-0.3cm}
\end{wrapfigure}

\textbf{Efficiency Comparison.}
Table~\ref{tab:efficiency_llama31} reports GPU memory, prefill latency, accuracy, and perplexity for LLaMA-3.1-8B under 7/32 and 11/32 layer pruning (sequence length $2{,}048$, batch size $1$). Latency is measured as the mean prefill time over 10 runs with 3 warmup iterations, and GPU memory is reported as the peak activated tensor footprint. \texttt{LinearPatch} and \texttt{Ghosted Layers} incur identical cost as a single $C \times C$ operation. Under this matched cost, \texttt{Ghosted Layers} consistently achieves higher accuracy and lower perplexity than \texttt{LinearPatch}.

\textbf{Effect of Pruning Depth on Accuracy.}
Figure~\ref{fig:diss} shows the average commonsense accuracy on LLaMA-3.1-8B as the number of pruned layers increases from 7 to 15. Ghosted Layers consistently achieves the highest accuracy across all pruning depths. Notably, the performance gap over competing methods widens as pruning becomes more aggressive, indicating that the unconstrained operator remains effective even under larger activation mismatch.

%% file: contents/6-limitation.tex
Ghosted Layers requires a small set of unlabeled calibration sequences to collect 
boundary activations $\mathbf{X}_{\mathrm{pre}}$ and $\mathbf{X}_{\mathrm{post}}$ 
from the unpruned model. The baselines we compare against, namely 
LinearPatch~\cite{lp}, ReplaceMe~\cite{shopkhoev2025replaceme}, 
and Prune\&Comp, share this requirement, and all methods in our experiments use 
the same 128 C4 sequences for a fair comparison. Since pruning itself already 
requires access to the unpruned model, these activations can be collected during 
pruning at no additional cost. Computing $\mathbf{W}^{*}$ further involves an SVD 
of $\mathbf{X}_{\mathrm{pre}} \in \mathbb{R}^{T_{D} \times C}$, which is a 
one-time offline cost that does not affect inference.


%% file: contents/6-conclusion.tex
We presented Ghosted Layers, a training-free recovery module for layer-pruned large language models that addresses boundary activation mismatch. Our approach derives a closed-form optimal linear operator from a small calibration set to directly reconstruct the activation discrepancy introduced by pruning. This solution corresponds to the unconstrained optimum of the alignment objective, while existing methods are restricted to constrained operator classes. Across multiple LLM backbones and pruning strategies, Ghosted Layers consistently improves accuracy and perplexity over prior training-free recovery methods at matched inference cost. These results demonstrate that solving boundary activation alignment at its unconstrained optimum is sufficient to substantially recover the performance of layer-pruned models without additional overhead.

%% file: contents/appendix.tex
\setcounter{table}{0}
\renewcommand{\thetable}{A\arabic{table}}

\setcounter{figure}{0}
\renewcommand{\thefigure}{A\arabic{figure}}

\setcounter{equation}{0}
\renewcommand{\theequation}{A\arabic{equation}}

\label{appendix:a}
\section*{Appendix: Ghosted Layers}
This appendix provides supplementary materials that complement the main paper. It includes the proof of our main theoretical result, detailed experimental settings, and additional quantitative results across calibration sizes, model architectures, fine-tuning protocols, and calibration datasets. The appendix is organized as follows:
\begin{itemize}
    \item \textbf{Appendix}~\ref{app:analyses}: Proof for Theorem~\ref{thm:main_main} and Constrained Solution space analysis
    \item \textbf{Appendix}~\ref{app:settings}: Experimental setups
    \item \textbf{Appendix}~\ref{app:csize}: Extended results (calibration size)
    \item \textbf{Appendix}~\ref{app:llms}: Extended results (Additional LLM architectures)
    \item \textbf{Appendix}~\ref{app:finetuning}: Extended results (fine-tuning)
    \item \textbf{Appendix}~\ref{app:wiki}: Extended results (Alternative calibration datasets)
    \item \textbf{Appendix}~\ref{app:efficiency}: Efficiency measurement
    \item \textbf{Appendix}~\ref{app:solver_comparison}: Closed-form solution computation
\end{itemize}


\section{Constrained solution space analysis}
\label{app:analyses}
\input{contents/appendix/a-analysis}

\section{Experimental setups}
\label{app:settings}
\input{contents/appendix/a-exp-setting}

\section{Ablation on size of calibration set}
\label{app:csize}

\input{contents/appendix/a-exp0}
\section{Fine-tuning results}
\label{app:finetuning}
\input{contents/appendix/a-exp-finetuning}

\newpage

\section{Additional Large Language Model experiments}
\label{app:llms}
\input{contents/appendix/a-exp-llms}

\section{Ablation study for different calibration dataset}
\label{app:wiki}
\input{contents/appendix/a-exp-wiki}

\newpage
\clearpage

\section{Efficiency measurement}
\input{contents/appendix/a-eff}

\newpage
\clearpage

\section{Closed-form Solution Computation}
\input{contents/appendix/a-svd-computation}

%% file: contents/appendix/a-analysis.tex
\subsection{Proof of Theorem~\ref{thm:main}}
\label{app:proof}
This appendix provides the proof of Theorem~\ref{thm:main_main}, which establishes that $\mathbf{W}^{*} = \mathbf{I} + \mathbf{M}^{*}$ is the minimum-norm solution to the unconstrained activation alignment objective, and that \texttt{LinearPatch}'s symmetric parameterization constitutes a strict subspace restriction of this solution. For completeness, we restate the theorem below before presenting its proof.

\setcounter{theorem}{0}

\begin{theorem}[Ghosted Layers is the Unconstrained Optimum of LinearPatch]
\label{thm:main}
Let $\mathbf{X}_{\mathrm{pre}}, \mathbf{X}_{\mathrm{post}} \in \mathbb{R}^{T_{\mathcal{D}} \times C}$ and $\boldsymbol{\Delta} = \mathbf{X}_{\mathrm{post}} - \mathbf{X}_{\mathrm{pre}}$. Both LinearPatch and Ghosted Layers produce a repaired activation of the form $\mathbf{X}_{\mathrm{new}} = \mathbf{X}_{\mathrm{pre}} \mathbf{W}$, but differ in the structure of $\mathbf{W}$:
\begin{align}
    \mathbf{X}_{\mathrm{new, LP}} = \mathbf{X}_{\mathrm{pre}} \cdot 
    \underbrace{\mathbf{H}\mathbf{D}\mathbf{H}^{\top}}_{\mathbf{W}_{\mathrm{LP}},\ \mathbf{W}_{\mathrm{LP}}^{\top} = \mathbf{W}_{\mathrm{LP}}} \quad 
    & \mathbf{X}_{\mathrm{new, GL}} = \mathbf{X}_{\mathrm{pre}} \cdot 
    \underbrace{(\mathbf{I} + \mathbf{M}^{*})}_{\mathbf{W}^{*},\ \mathbf{W}^{*} \in \mathbb{R}^{C \times C}}
\end{align}
where $\mathbf{M}^{*} = \mathbf{X}_{\mathrm{pre}}^{\dagger}\boldsymbol{\Delta}$, and $\mathbf{W}^{*} = \mathbf{I} + \mathbf{M}^{*}$ is the minimum-norm solution to the unconstrained alignment objective $\min_{\mathbf{W}} \|\mathbf{X}_{\mathrm{pre}}\mathbf{W} - \mathbf{X}_{\mathrm{post}}\|_F^2$, whereas $\mathbf{W}_{\mathrm{LP}}$ searches only over symmetric matrices, making LinearPatch a constrained approximation to Ghosted Layers.
\end{theorem}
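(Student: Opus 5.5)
The plan is to split the statement into three claims and prove each with elementary least-squares arguments: (i) $\mathbf{W}^{*}=\mathbf{I}+\mathbf{M}^{*}$ minimizes $\|\mathbf{X}_{\mathrm{pre}}\mathbf{W}-\mathbf{X}_{\mathrm{post}}\|_F^2$ over all of $\mathbb{R}^{C\times C}$, with the appropriate minimum-norm property; (ii) every LinearPatch operator $\mathbf{H}\mathbf{D}\mathbf{H}^{\top}$ is symmetric, so its search set is contained in the symmetric subspace; (iii) hence LinearPatch's attainable loss is never below that of $\mathbf{W}^{*}$, and is strictly larger whenever $\mathbf{M}^{*}$ has a nonzero anti-symmetric part.

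For (i), I would use the reparameterization $\mathbf{W}=\mathbf{I}+\mathbf{M}$ from Eq.~\eqref{eq:M_obj}. Since $\mathbf{X}_{\mathrm{pre}}\mathbf{W}-\mathbf{X}_{\mathrm{post}}=\mathbf{X}_{\mathrm{pre}}\mathbf{M}-\boldsymbol{\Delta}$, this map is a bijective translation with identical objective values, so minimizers correspond one-to-one. The objective is convex in $\mathbf{M}$ and decouples column by column, so a minimizer is exactly a solution of the normal equations $\mathbf{X}_{\mathrm{pre}}^{\top}\mathbf{X}_{\mathrm{pre}}\mathbf{M}=\mathbf{X}_{\mathrm{pre}}^{\top}\boldsymbol{\Delta}$. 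Using the thin SVD $\mathbf{X}_{\mathrm{pre}}=\mathbf{U}\boldsymbol{\Sigma}\mathbf{V}^{\top}$ of Eq.~\eqref{eq:mstar_method}, I would check that $\mathbf{M}^{*}=\mathbf{V}\boldsymbol{\Sigma}^{\dagger}\mathbf{U}^{\top}\boldsymbol{\Delta}$ satisfies these equations. The general solution is $\mathbf{M}^{*}+\mathbf{N}$ with the columns of $\mathbf{N}$ in $\ker\mathbf{X}_{\mathrm{pre}}$. The columns of $\mathbf{M}^{*}$ lie in $\operatorname{row}(\mathbf{X}_{\mathrm{pre}})=(\ker\mathbf{X}_{\mathrm{pre}})^{\perp}$, so Pythagoras gives that $\mathbf{M}^{*}$ is the unique minimum-Frobenius-norm solution. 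This settles the exact pseudoinverse; with the $\epsilon$-thresholded $\boldsymbol{\Sigma}^{\dagger}$, the argument holds up to the truncated singular directions.

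The point I expect to need the most care is the sense of ``minimum-norm'' for $\mathbf{W}^{*}$. The argument above gives minimality of $\|\mathbf{W}-\mathbf{I}\|_F$, not of $\|\mathbf{W}\|_F$. The minimum-$\|\mathbf{W}\|_F$ minimizer is $\mathbf{X}_{\mathrm{pre}}^{\dagger}\mathbf{X}_{\mathrm{post}}=\mathbf{X}_{\mathrm{pre}}^{\dagger}\mathbf{X}_{\mathrm{pre}}+\mathbf{M}^{*}$, which equals $\mathbf{W}^{*}$ precisely when $\mathbf{X}_{\mathrm{pre}}^{\dagger}\mathbf{X}_{\mathrm{pre}}=\mathbf{I}$, i.e.\ when $\mathbf{X}_{\mathrm{pre}}$ has full column rank. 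I would state this explicitly. In the regime $T_{\mathcal{D}}\gg C$ assumed by the paper, full column rank holds generically. The objective is then strictly convex, the minimizer $\mathbf{W}^{*}=(\mathbf{X}_{\mathrm{pre}}^{\top}\mathbf{X}_{\mathrm{pre}})^{-1}\mathbf{X}_{\mathrm{pre}}^{\top}\mathbf{X}_{\mathrm{post}}$ is unique, and both readings of minimum-norm coincide. In the rank-deficient case I would phrase the claim as minimum norm relative to the identity, i.e.\ in the $\mathbf{M}$ parameterization.

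For (ii), $\mathbf{D}$ is diagonal, so $(\mathbf{H}\mathbf{D}\mathbf{H}^{\top})^{\top}=\mathbf{H}\mathbf{D}^{\top}\mathbf{H}^{\top}=\mathbf{H}\mathbf{D}\mathbf{H}^{\top}$. The LinearPatch family therefore lies in $\mathcal{S}=\{\mathbf{W}:\mathbf{W}=\mathbf{W}^{\top}\}$, a subspace of dimension $C(C+1)/2$, and for fixed $\mathbf{H}$ it is in fact a further $C$-dimensional subset. For (iii), containment immediately gives $\min_{\mathbf{D}}\|\mathbf{X}_{\mathrm{pre}}\mathbf{H}\mathbf{D}\mathbf{H}^{\top}-\mathbf{X}_{\mathrm{post}}\|_F^2\ge\min_{\mathbf{W}\in\mathcal{S}}\|\cdot\|_F^2\ge\|\mathbf{X}_{\mathrm{pre}}\mathbf{W}^{*}-\mathbf{X}_{\mathrm{post}}\|_F^2$, which is the sense in which LinearPatch is a constrained approximation. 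For strictness under full column rank, the global minimizer is unique. Because $\mathbf{I}$ is symmetric, $\mathbf{W}^{*}\notin\mathcal{S}$ if and only if $\mathbf{M}^{*}_{\mathrm{asym}}\neq\mathbf{0}$. In that case every symmetric $\mathbf{W}$, and in particular every $\mathbf{H}\mathbf{D}\mathbf{H}^{\top}$, incurs strictly larger loss. This links the theorem to the empirical decomposition in Figure~\ref{fig:symm}.
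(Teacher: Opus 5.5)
Your proposal is correct and follows essentially the same route as the paper's proof. You reparameterize $\mathbf{W}=\mathbf{I}+\mathbf{M}$, solve the column-wise normal equations via the thin-SVD pseudoinverse, and check that $(\mathbf{H}\mathbf{D}\mathbf{H}^{\top})^{\top}=\mathbf{H}\mathbf{D}\mathbf{H}^{\top}$, so $\mathbf{W}^{*}$ is unreachable by LinearPatch whenever $\mathbf{M}^{*}_{\mathrm{asym}}\neq\mathbf{0}$. Where you go further, you are more careful than the paper: your kernel/Pythagoras argument proves the minimum-norm property without the paper's ill-defined $(\mathbf{X}_{\mathrm{pre}}^{\top}\mathbf{X}_{\mathrm{pre}})^{-1}$ in the rank-deficient case, and you correctly observe that $\mathbf{W}^{*}$ minimizes $\|\mathbf{W}-\mathbf{I}\|_F$ rather than $\|\mathbf{W}\|_F$ unless $\mathbf{X}_{\mathrm{pre}}$ has full column rank, since $\mathbf{X}_{\mathrm{pre}}^{\dagger}\mathbf{X}_{\mathrm{post}}=\mathbf{W}^{*}-(\mathbf{I}-\mathbf{X}_{\mathrm{pre}}^{\dagger}\mathbf{X}_{\mathrm{pre}})$, a distinction the theorem statement glosses over.
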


\begin{proof}
Both methods produce $\mathbf{X}_{\mathrm{new}} = \mathbf{X}_{\mathrm{pre}}\mathbf{W}$.

Substituting $\mathbf{W} = \mathbf{I} + \mathbf{M}$ into the alignment objective:
\begin{align}
    \|\mathbf{X}_{\mathrm{pre}}\mathbf{W} - \mathbf{X}_{\mathrm{post}}\|_F^2
    &= \|\mathbf{X}_{\mathrm{pre}}(\mathbf{I} + \mathbf{M}) - \mathbf{X}_{\mathrm{post}}\|_F^2 \notag \\
    &= \|\mathbf{X}_{\mathrm{pre}}\mathbf{M} - 
    \underbrace{(\mathbf{X}_{\mathrm{post}} - \mathbf{X}_{\mathrm{pre}})}_{\boldsymbol{\Delta}}\|_F^2 \notag \\
    &= \|\mathbf{X}_{\mathrm{pre}}\mathbf{M} - \boldsymbol{\Delta}\|_F^2.
\end{align}
Hence minimizing over $\mathbf{W}$ is equivalent to $\min_{\mathbf{M}} \|\mathbf{X}_{\mathrm{pre}}\mathbf{M} - \boldsymbol{\Delta}\|_F^2$.

Since the Frobenius norm decouples across columns, it suffices to solve independently for each column $j = 1, \ldots, C$:
\begin{equation}
    \min_{\mathbf{m}_j \in \mathbb{R}^{C}} 
    \|\mathbf{X}_{\mathrm{pre}}\mathbf{m}_j - \boldsymbol{\Delta}_{:,j}\|_2^2.
    \label{eq:col_ls}
\end{equation}
Taking the gradient with respect to $\mathbf{m}_j$ and setting it to zero:
\begin{align}
    \nabla_{\mathbf{m}_j}\|\mathbf{X}_{\mathrm{pre}}\mathbf{m}_j - \boldsymbol{\Delta}_{:,j}\|_2^2
    &= 2\mathbf{X}_{\mathrm{pre}}^{\top}(\mathbf{X}_{\mathrm{pre}}\mathbf{m}_j - \boldsymbol{\Delta}_{:,j}) 
    = \mathbf{0},
\end{align}
which yields the normal equations:
\begin{equation}
    \mathbf{X}_{\mathrm{pre}}^{\top} \mathbf{X}_{\mathrm{pre}}\, \mathbf{m}_j 
    = \mathbf{X}_{\mathrm{pre}}^{\top} \boldsymbol{\Delta}_{:,j}.
\end{equation}
When $\mathbf{X}_{\mathrm{pre}}$ has full column rank, $\mathbf{X}_{\mathrm{pre}}^{\top}\mathbf{X}_{\mathrm{pre}}$ is invertible. To handle the general rank-deficient case, let $\mathbf{X}_{\mathrm{pre}} = \mathbf{U}\boldsymbol{\Sigma}\mathbf{V}^{\top}$ be the thin SVD with $\mathbf{U} \in \mathbb{R}^{T_{\mathcal{D}} \times r}$, $\boldsymbol{\Sigma} = \mathrm{diag}(\sigma_1, \ldots, \sigma_r)$, $\mathbf{V} \in \mathbb{R}^{C \times r}$, and $r = \mathrm{rank}(\mathbf{X}_{\mathrm{pre}})$. The minimum-norm least-squares solution is then:
\begin{align}
    \mathbf{m}_j^{*} 
    &= \left(\mathbf{X}_{\mathrm{pre}}^{\top} \mathbf{X}_{\mathrm{pre}}\right)^{-1}
    \mathbf{X}_{\mathrm{pre}}^{\top} \boldsymbol{\Delta}_{:,j} \\
    &= \mathbf{V}\boldsymbol{\Sigma}^{\dagger}\mathbf{U}^{\top} \boldsymbol{\Delta}_{:,j} \\
    &= \mathbf{X}_{\mathrm{pre}}^{\dagger} \boldsymbol{\Delta}_{:,j},
\end{align}
where the Moore--Penrose pseudoinverse $\mathbf{X}_{\mathrm{pre}}^{\dagger} = \mathbf{V}\boldsymbol{\Sigma}^{\dagger}\mathbf{U}^{\top}$ with:
\begin{equation}
    \left(\boldsymbol{\Sigma}^{\dagger}\right)_{ii} = 
    \begin{cases} 
        1/\sigma_i & \text{if } \sigma_i > \epsilon\,\sigma_1, \\ 
        0 & \text{otherwise,}
    \end{cases}
    \qquad \epsilon = 10^{-6}.
\end{equation}
Stacking the per-column solutions over all $j = 1, \ldots, C$:
\begin{equation}
    \mathbf{M}^{*} 
    = \mathbf{X}_{\mathrm{pre}}^{\dagger}\boldsymbol{\Delta} 
    = \mathbf{V}\boldsymbol{\Sigma}^{\dagger}\mathbf{U}^{\top}\boldsymbol{\Delta},
    \qquad 
    \mathbf{W}^{*} = \mathbf{I} + \mathbf{M}^{*}.
\end{equation}

LinearPatch parameterizes $\mathbf{W}_{\mathrm{LP}} = \mathbf{H}\mathbf{D}\mathbf{H}^{\top}$, where $\mathbf{H} \in \mathbb{R}^{C \times C}$ is the Walsh--Hadamard matrix satisfying $\mathbf{H}^{\top} = \mathbf{H}^{-1}$, and $\mathbf{D} = \mathrm{diag}(d_1,\ldots,d_C)$ is a diagonal matrix.

We verify symmetry by direct computation:
\begin{align}
    \mathbf{W}_{\mathrm{LP}}^{\top} 
    &= \left(\mathbf{H}\mathbf{D}\mathbf{H}^{\top}\right)^{\top} \notag \\
    &= \left(\mathbf{H}^{\top}\right)^{\top} \mathbf{D}^{\top} \mathbf{H}^{\top} \notag \\
    &= \mathbf{H} \mathbf{D}^{\top} \mathbf{H}^{\top}.
\end{align}
Since $\mathbf{D}$ is diagonal, $\mathbf{D}^{\top} = \mathbf{D}$, and therefore:
\begin{equation}
    \mathbf{W}_{\mathrm{LP}}^{\top} = \mathbf{H}\mathbf{D}\mathbf{H}^{\top} = \mathbf{W}_{\mathrm{LP}}.
\end{equation}
Hence $\mathbf{W}_{\mathrm{LP}}$ is symmetric for any choice of $\mathbf{D}$, and lies in the space of $C \times C$ symmetric matrices, a subspace of dimension $\frac{C(C+1)}{2}$, strictly smaller than $C^2 = \dim(\mathbb{R}^{C \times C})$. Therefore, $\mathbf{W}_{\mathrm{LP}}$ cannot attain $\mathbf{W}^{*} = \mathbf{I} + \mathbf{M}^{*}$ whenever $\mathbf{W}^{*}$ has a non-zero anti-symmetric component, i.e., whenever $\mathbf{W}^{*} \neq (\mathbf{W}^{*})^{\top}$.
\end{proof}

\subsection{Computing the Symmetric and Anti-symmetric Decomposition}
\label{app:sym_decomp}

This section details the procedure used to compute the symmetric and anti-symmetric components of $\mathbf{M}^{*}$ reported in Figure~\ref{fig:symm} in Section~\ref{sec:analysis}.

\subsubsection{Experimental Setups}

\paragraph{Models.} We evaluate three open-source LLM backbones: LLaMA-3-8B~\citep{dubey2024llama3}, LLaMA-3.1-8B~\citep{dubey2024llama3}, and DeepSeek-R1-Distill-LLaMA-8B~\citep{guo2025deepseekr1}. All three models share the same hidden dimension $C{=}4{,}096$.

\paragraph{Calibration data.} We use 128 sequences of length $T{=}2{,}048$ sampled from the C4 training split~\citep{raffel2020c4}, following LinearPatch~\citep{lp}. For the symmetry decomposition, we process $32$ batches of these sequences to form the boundary activation matrices, yielding $T_{\mathcal{D}} = 32 \times 2{,}048 = 65{,}536$ tokens per backbone.

\paragraph{Pruning criterion.} All backbones use the LLM-Streamline criterion~\citep{chen2024streamline}, which selects the contiguous block $\mathcal{B} = \{\ell^{*}, \ldots, \ell^{*}+n-1\}$ of $n{=}7$ layers whose boundary activations exhibit the highest cosine similarity. The same 128 calibration sequences are reused for block selection and for operator computation.

\subsubsection{Procedure}
\paragraph{Step 1: Layer selection.}
We select the pruning block $\mathcal{B}$ using the criterion described above. For each backbone, this yields a specific start index $\ell^{*}$ and end index $\ell^{*}+n$ determined by the cosine-similarity ranking of boundary activations.

\paragraph{Step 2: Boundary activation capture.}
With the unpruned model $\mathcal{M}$ in evaluation mode and \texttt{use\_cache=False}, we register a forward pre-hook on layer $\ell^{*}$ to capture its input $\mathbf{X}_{\mathrm{pre}}$, and a forward pre-hook on layer $\ell^{*}+n$ to capture its input $\mathbf{X}_{\mathrm{post}}$. A forward pre-hook fires immediately before a layer's computation, so the captured tensors correspond exactly to the boundary activations defined in Eq.~\ref{eq:delta}. Hooks are detached from the computation graph and stored on CPU. The collected tensors have shape $\mathbb{R}^{T_{\mathcal{D}} \times C}$.

\paragraph{Step 3: Solving for $\mathbf{M}^{*}$.}
We promote $\mathbf{X}_{\mathrm{pre}}, \mathbf{X}_{\mathrm{post}}$ to \texttt{float64} and form $\boldsymbol{\Delta} = \mathbf{X}_{\mathrm{post}} - \mathbf{X}_{\mathrm{pre}}$. We then solve the regularized normal equations
\begin{equation}
    (\mathbf{X}_{\mathrm{pre}}^{\top}\mathbf{X}_{\mathrm{pre}} + \epsilon \mathbf{I})\,\mathbf{M}^{*} = \mathbf{X}_{\mathrm{pre}}^{\top} \boldsymbol{\Delta}
    \label{eq:regularized_normal}
\end{equation}
via \texttt{torch.linalg.solve}, which internally uses an LU factorization. This is mathematically equivalent to $\mathbf{M}^{*} = \mathbf{X}_{\mathrm{pre}}^{\dagger} \boldsymbol{\Delta}$ when $\mathbf{X}_{\mathrm{pre}}$ has full column rank (Eq.~\ref{eq:mstar_method}), which holds generically whenever $T_{\mathcal{D}} \gg C$.

\paragraph{Step 4: Decomposition and reporting.}
We decompose the resulting $\mathbf{M}^{*} \in \mathbb{R}^{C \times C}$ into its symmetric and anti-symmetric parts using Eq.~\ref{eq:decomp}, compute the Frobenius norms $\|\mathbf{M}^{*}\|_F$, $\|\mathbf{M}^{*}_{\mathrm{sym}}\|_F$, and $\|\mathbf{M}^{*}_{\mathrm{asym}}\|_F$, and report the ratios $\|\mathbf{M}^{*}_{\mathrm{sym}}\|_F / \|\mathbf{M}^{*}\|_F$ and $\|\mathbf{M}^{*}_{\mathrm{asym}}\|_F / \|\mathbf{M}^{*}\|_F$ in Figure~\ref{fig:symm}. The procedure is identical across all three backbones; no model-specific tuning is performed.

%% file: contents/appendix/a-exp-setting.tex
\subsection{Details on pruned models}
\label{app:pruned_models}

All experiments are conducted on officially released LLM checkpoints obtained from Hugging Face, summarized in Table~\ref{tab:model_links}.

\begin{table}[h]
\centering
\footnotesize
\caption{Hugging Face sources for the LLM checkpoints used in our experiments.}
\label{tab:model_links}
\begin{adjustbox}{width=\textwidth}
\begin{tabular}{ll}
\toprule
Model & Download link \\
\midrule
LLaMA-2-7B                 & \url{https://huggingface.co/meta-llama/Llama-2-7b-hf} \\
LLaMA-3-8B                 & \url{https://huggingface.co/meta-llama/Meta-Llama-3-8B} \\
LLaMA-3.1-8B               & \url{https://huggingface.co/meta-llama/Llama-3.1-8B} \\
OLMo-2-7B                  & \url{https://huggingface.co/allenai/OLMo-2-1124-7B} \\
Qwen-3-14B                 & \url{https://huggingface.co/Qwen/Qwen3-14B} \\
DeepSeek-R1-Distill-Llama-8B & \url{https://huggingface.co/deepseek-ai/DeepSeek-R1-Distill-Llama-8B} \\
\bottomrule
\end{tabular}
\end{adjustbox}
\end{table}

\subsection{Details on pruning and recovery methods}
\label{app:methods}

This section describes the pruning criteria and recovery methods used as baselines in our experiments. All baselines are reproduced from their official implementations to ensure a fair comparison. Table~\ref{tab:method_repos} lists the official repositories for each method.

\begin{table}[h]
\centering
\small
\caption{Official repositories of the pruning criteria and recovery methods used in our experiments.}
\label{tab:method_repos}
\begin{adjustbox}{width=\textwidth}
\begin{tabular}{lll}
\toprule
Category & Method & Official repository \\
\midrule
\multirow{3}{*}{Pruning criterion}
& ShortGPT~\citep{men2024shortgpt}       & \url{https://github.com/sramshetty/ShortGPT} \\
& Shortened LLaMA~\citep{kim2024shortened} & \url{https://github.com/Nota-NetsPresso/shortened-llm} \\
& LLM-Streamline~\citep{chen2024streamline} & \url{https://github.com/ruckbreasoning/llm-streamline} \\
\midrule
\multirow{4}{*}{Recovery method}
& Prune\&Comp~\citep{chen2025prunecomp}    & N/A \\
& ReplaceMe~\citep{shopkhoev2025replaceme} & \url{https://github.com/mts-ai/ReplaceMe} \\
& LinearPatch~\citep{lp}  & \url{https://github.com/chenxinrui-tsinghua/LinearPatch} \\
& \texttt{Ghosted Layers} (Ours)           & Released upon acceptance \\
\bottomrule
\end{tabular}
\end{adjustbox}
\end{table}

\subsubsection{Pruning criteria}

\paragraph{ShortGPT~\citep{men2024shortgpt}.}
ShortGPT assigns each layer a Block Influence (BI) score defined as one minus the cosine similarity between its input and output hidden states, averaged over a calibration set. Layers with the lowest BI scores are removed in a one-shot manner. We reproduce ShortGPT using its official implementation, and BI scores are computed on the same 128 C4 sequences used throughout the paper.

\paragraph{Shortened LLaMA~\citep{kim2024shortened}.}
Shortened LLaMA evaluates each layer's contribution by the perplexity degradation incurred when that layer is removed from the original model, and prunes the layers with the smallest perplexity impact. We use the official implementation and the released pruned-layer indices where available.

\paragraph{LLM-Streamline~\citep{chen2024streamline}.}
LLM-Streamline selects a single \emph{contiguous} block of layers to remove, choosing the block $\mathcal{B} = \{\ell^{*}, \ldots, \ell^{*}+n-1\}$ whose boundary activations exhibit the highest cosine similarity. This minimizes the representation change across the pruned region. We reproduce LLM-Streamline using its official implementation and adopt it as the default pruning criterion in our main experiments unless otherwise specified.

\subsubsection{Recovery methods}

\paragraph{Prune\&Comp~\citep{chen2025prunecomp}.}
Prune\&Comp rescales the surviving boundary weights with per-channel scalar factors estimated from the calibration set, compensating for magnitude shifts without introducing any additional parameters or cross-channel interaction. As no official implementation is publicly available at the time of submission, we reimplement Prune\&Comp from the description in the original paper, using the same 128 C4 calibration sequences as all other methods to ensure a fair comparison.

\paragraph{ReplaceMe~\citep{shopkhoev2025replaceme}.}
ReplaceMe approximates the computation of the pruned block by a linear transformation applied to the boundary block's MLP output, and absorbs this transformation into the surviving MLP weights. We reproduce ReplaceMe using its official implementation and include two variants: 
\begin{itemize}
    \item \textbf{ReplaceMe (LS)}, which estimates the linear map via least squares
    \item \textbf{ReplaceMe (Cos)}, which optimizes a cosine-distance objective with Adam for 10 epochs using the default hyperparameters of the official implementation.
\end{itemize}

\paragraph{LinearPatch~\citep{lp}.}
LinearPatch inserts a single matrix multiplication at the pruning boundary, parameterized as $\mathbf{W}_{\mathrm{LP}} = \mathbf{H}\mathbf{D}\mathbf{H}^{\top}$, where $\mathbf{H}$ is the Walsh--Hadamard matrix and $\mathbf{D}$ is a diagonal scaling matrix. This parameterization is real symmetric by construction. We reproduce LinearPatch using its official implementation and include two variants:
\begin{itemize}
    \item \textbf{LinearPatch (Diag)}, which applies only channel-wise scaling,
    \item \textbf{LinearPatch (Rotate)}, which additionally applies the Hadamard rotation.
\end{itemize}

\paragraph{\texttt{Ghosted Layers} (Ours).}
\texttt{Ghosted Layers} inserts an unconstrained linear operator $\mathbf{W}^{*} = \mathbf{I} + \mathbf{M}^{*}$ at the pruning boundary, where $\mathbf{M}^{*} = \mathbf{X}_{\mathrm{pre}}^{\dagger} \boldsymbol{\Delta}$ is the closed-form minimum-norm solution to the alignment objective defined in Section~\ref{sec:method}. Our implementation builds on the official LinearPatch codebase and uses the same calibration set (128 sequences from C4, $T{=}2{,}048$) across all experiments.

\subsubsection{Details of evaluation benchmarks}
\label{app:eval}

We assess model quality along two axes: language modeling perplexity and downstream task accuracy.

\paragraph{Perplexity (PPL).}
We report perplexity on three standard language modeling corpora: WikiText-2~\citep{wiki}, C4~\citep{raffel2020c4}, and Penn Treebank (PTB)~\citep{marcus1993ptb}. Perplexity is computed on non-overlapping windows of length $T{=}2{,}048$, consistent with the calibration sequence length.

\paragraph{Commonsense QA.}
For downstream accuracy, we evaluate on nine zero-shot commonsense reasoning benchmarks: ARC-Easy and ARC-Challenge~\citep{clark2018arc}, HellaSwag~\citep{zellers2019hellaswag}, WinoGrande~\citep{sakaguchi2021winogrande}, BoolQ~\citep{clark2019boolq}, OpenbookQA~\citep{mihaylov2018openbookqa}, RTE~\citep{rte}, COPA~\citep{roemmele2011copa}, and RACE~\citep{lai2017race}.

\paragraph{Evaluation framework.}
All perplexity and QA benchmarks are evaluated using the \texttt{lm-evaluation-harness} library from \url{https://github.com/EleutherAI/lm-evaluation-harness}, following the default evaluation protocols.

%% file: contents/appendix/a-exp-finetuning.tex
\subsection{Fine-tuning setup}

We follow the fine-tuning protocol of \texttt{LinearPatch}~\citep{lp} exactly to ensure a fair head-to-head comparison under identical post-training conditions. Specifically, we adopt a memory-efficient offline knowledge distillation strategy where the pruned model (student) is trained to match the output distribution of the unpruned model (teacher) via Kullback--Leibler (KL) divergence on the top-$K$ logits.

\paragraph{Distillation objective.}
We optimize only the boundary operator ($\mathbf{W}^{*}$ for \texttt{Ghosted Layers} or $\mathbf{W}_{\mathrm{LP}}$ for \texttt{LinearPatch}) while freezing all other parameters of the pruned model. For each training sample $\mathbf{x} \in \mathcal{T}$, we minimize:
\begin{equation}
    \min_{\mathbf{W}} \mathbb{E}_{\mathbf{x} \in \mathcal{T}} \, \mathrm{KL}\!\left(\mathbf{o}_{t}(\mathbf{x}),\ \mathbf{o}_{s}(\mathbf{x})\right),
\end{equation}
where $\mathbf{o}_{t}$ and $\mathbf{o}_{s}$ denote the top-$K$ logits probability distributions from the teacher and student, respectively, using the teacher's vocabulary indices for both. Following LinearPatch paper~\cite{lp}, we set $K = 100$.

\paragraph{Training configuration.}
We use the identical configuration across all compared methods (\texttt{LinearPatch (Diag) + FT}, \texttt{LinearPatch (Rotate) + FT}, and \texttt{Ghosted Layers + FT}) to isolate the contribution of the operator parameterization:
\begin{itemize}
    \item \textbf{Optimizer:} AdamW~\citep{loshchilov2019adamw} with a learning rate of $1 \times 10^{-4}$.
    \item \textbf{Training data:} 5{,}000 sequences of length $T = 2{,}048$ randomly sampled from the C4 training split~\citep{raffel2020c4}.
    \item \textbf{Schedule:} One epoch, with no learning rate warmup or decay.
    \item \textbf{Frozen parameters:} only the boundary operator is updated; all other model parameters are frozen.
\end{itemize}
For \texttt{Ghosted Layers}, the closed-form $\mathbf{W}^{*} = \mathbf{I} + \mathbf{M}^{*}$ is used as the initialization and then fine-tuned under the same objective. We drop any structural constraint on $\mathbf{W}^{*}$ during fine-tuning, consistent with the unconstrained formulation in Section~\ref{sec:method}.

\paragraph{Evaluation.}
We evaluate fine-tuned models on the same nine commonsense QA benchmarks (Table~\ref{tab:commonsensekl_common}) and three perplexity corpora, WikiText-2~\citep{wiki}, C4~\citep{raffel2020c4}, and PTB~\citep{marcus1993ptb} (Table~\ref{tab:commonsensekl_ppl}), used throughout the main paper. Perplexity is computed on non-overlapping windows of length $T = 2{,}048$.

\begin{table*}[h]
\centering
\scriptsize
\caption{Comparison on Commonsense Reasoning benchmark with SOTA post-training method LLM-Streamline and Linear Patch 7-layer pruning}
\begin{adjustbox}{width=1\columnwidth}
\begin{tabular}{l l c c c c c c c c c c}
\toprule
\textbf{Model} & \textbf{Method} & \textbf{ARC-E} & \textbf{ARC-C} & \textbf{HellaS} & \textbf{WinoG} & \textbf{BoolQ} & \textbf{OBQA} & \textbf{RTE} & \textbf{CoPa} & \textbf{Race} & \textbf{AVG}\\
\midrule
\multirow{5}{*}{\begin{sideways}LLaMA-2-7B\end{sideways}}
& Dense          & 74.49 & 46.25 & 75.99 & 68.90 & 77.71 & 44.20 & 62.82 & 87.00 & 39.62 & 64.11\\
\cmidrule{2-12}
& Pruned LLM & 55.89 & 36.18 & 62.64 & 66.38 & 62.17 & 37.20 & 52.35 & 81.00 & 33.78 & 54.18\\
& Linear Patch (Diag) + FT & 61.62 & 37.63 & 68.61 & 68.19 & 72.02 & 36.60 & 68.59 & 84.00 & 37.51 & 59.42\\
& Linear Patch (Rotate) + FT & 61.95 & 37.97 & 68.59 & 68.27 & 71.83 & 37.00 & 66.79 & 86.00 & 38.28 & 59.63\\
& \jygl \textbf{Ghost Layer (Ours) + FT} & \jygl62.92 & \jygl36.69 & \jygl67.87 & \jygl67.01 & \jygl75.66 & \jygl37.40 & \jygl67.15 & \jygl85.00 & \jygl37.70 & \jygl59.71\\
\midrule
\multirow{5}{*}{\begin{sideways}LLaMA3-8B\end{sideways}}
& Dense          & 77.65 & 53.41 & 79.16 & 72.38 & 81.35 & 45.00 & 69.68 & 89.00 & 40.00 & 67.51\\
\cmidrule{2-12}
& Pruned LLM & 39.69 & 28.84 & 33.18 & 55.49 & 38.07 & 29.60 & 57.40 & 60.00 & 24.02 & 40.70\\
& Linear Patch (Diag) + FT & 64.44 & 44.11 & 71.17 & 72.38 & 69.51 & 37.00 & 65.70 & 83.00 & 37.42 & 60.53\\
& Linear Patch (Rotate) + FT & 64.60 & 44.28 & 71.09 & 73.16 & 70.12 & 37.00 & 66.06 & 83.00 & 37.22 & 60.73\\
& \jygl \textbf{Ghost Layer (Ours) + FT} & \jygl67.59 & \jygl44.11 & \jygl69.19 & \jygl72.22 & \jygl75.54 & \jygl40.00 & \jygl63.90 & \jygl85.00 & \jygl36.65 & \jygl61.58\\
\midrule
\multirow{5}{*}{\begin{sideways}LLaMA3.1-8B\end{sideways}}
& Dense          & 81.19 & 53.41 & 78.92 & 73.64 & 82.11 & 44.80 & 69.68 & 87.00 & 39.14 & 67.77\\
\cmidrule{2-12}
& Pruned LLM & 44.19 & 33.11 & 33.39 & 56.99 & 38.20 & 32.60 & 58.12 & 61.00 & 25.84 & 42.60\\
& Linear Patch (Diag) + FT & 67.42 & 43.26 & 70.93 & 72.14 & 67.80 & 36.60 & 69.31 & 81.00 & 38.37 & 60.76\\
& Linear Patch (Rotate) + FT & 67.80 & 43.77 & 70.88 & 72.22 & 67.13 & 36.80 & 69.31 & 81.00 & 37.80 & 60.75\\
& \jygl \textbf{Ghost Layer (Ours) + FT} & \jygl70.03 & \jygl43.69 & \jygl68.81 & \jygl71.67 & \jygl73.09 & \jygl40.20 & \jygl67.87 & \jygl84.00 & \jygl36.75 & \jygl61.79\\
\midrule
\multirow{5}{*}{\begin{sideways}D-LLaMA-8B\end{sideways}}
& Dense          & 65.87 & 42.41 & 74.35 & 67.80 & 82.91 & 41.20 & 69.68 & 89.00 & 41.63 & 63.87\\
\cmidrule{2-12}
& Pruned LLM & 49.92 & 35.24 & 46.33 & 61.56 & 51.99 & 32.40 & 57.40 & 70.00 & 27.27 & 48.01\\
& Linear Patch (Diag) + FT & 56.48 & 37.46 & 67.92 & 64.80 & 81.28 & 37.80 & 70.04 & 83.00 & 38.85 & 59.74\\
& Linear Patch (Rotate) + FT & 56.48 & 37.54 & 67.72 & 68.11 & 80.15 & 36.40 & 68.95 & 83.00 & 38.37 & 59.64\\
& \jygl Ghost Layer (Ours) + FT & \jygl57.37 & \jygl38.91 & \jygl66.01 & \jygl66.77 & \jygl77.83 & \jygl37.80 & \jygl72.92 & \jygl87.00 & \jygl39.33 & \jygl60.44\\
\bottomrule
\end{tabular}
\end{adjustbox}
\label{tab:commonsensekl_common}
\end{table*}

\begin{table*}[h]
\centering
\scriptsize
\caption{Comparison on PPL and Commonsense Reasoning benchmark with SOTA post-training method LLM-Streamline and Linear Patch 7-layer pruning}
\begin{tabular}{l l r r r r}
\toprule
\textbf{Model} & \textbf{Method }& \textbf{WIKI} &\textbf{C4} & \textbf{PTB} & \textbf{PPL AVG}\\
\midrule
\multirow{5}{*}{\begin{sideways}LLaMA-2-7B\end{sideways}}
& Dense          & 5.47 & 6.71 & 22.51 & 11.56\\
\cmidrule{2-6}
& Pruned LLM & 18.45 & 25.37 & 62.18 & 35.33\\
& Linear Patch (Diag) + FT&  11.13 & 12.19 & 37.09 & 20.14\\
& Linear Patch (Rotate) + FT &  10.91 & 12.07 & 37.28 & 20.09\\
& \jygl \textbf{Ghost Layer (Ours) + FT} &  \jygl10.01 & \jygl10.31 & \jygl31.27 & \jygl17.20\\
\midrule
\multirow{5}{*}{\begin{sideways}LLaMA3-8B\end{sideways}}
& Dense          &  6.14 & 8.61 & 10.58 & 8.44\\
\cmidrule{2-6}
& Pruned LLM & 2305.48 & 2106.64 & 4642.40 & 3018.17\\
& Linear Patch (Diag)+ FT & 17.29 & 21.59 & 30.25 & 23.04\\
& Linear Patch (Rotate)+ FT &  17.15 & 21.62 & 30.48 & 23.08\\
& \jygl \textbf{Ghost Layer (Ours) + FT} & \jygl13.00 & \jygl14.96 & \jygl23.03 & \jygl17.00\\
\midrule
\multirow{5}{*}{\begin{sideways}LLaMA3.1-8B\end{sideways}}
& Dense          & 6.24 & 8.68 & 10.58 & 8.50\\
\cmidrule{2-6}
& Pruned LLM & 2301.46 & 1173.53 & 3720.23 & 2398.41\\
& Linear Patch (Diag) + FT & 17.26 & 21.51 & 30.20 & 22.99\\
& Linear Patch (Rotate) + FT & 17.05 & 21.54 & 30.33 & 22.97\\
& \jygl \textbf{Ghost Layer (Ours) + FT} & \jygl 13.10 & \jygl 14.99 & \jygl23.14 & \jygl17.08\\
\midrule
\multirow{5}{*}{\begin{sideways}D-LLaMA-8B\end{sideways}}
& Dense          & 13.13 & 19.46 & 22.28 & 18.29\\
\cmidrule{2-6}
& Pruned LLM &  3083.95 & 1291.64 & 2985.68 & 2453.76\\
& Linear Patch (Diag) + FT&  35.96 & 34.63 & 51.01 & 40.53\\
& Linear Patch (Rotate) + FT & 33.64 & 33.47 & 51.39 & 39.50\\
& \jygl Ghost Layer (Ours) + FT &  \jygl22.92 & \jygl25.26 & \jygl33.66 & \jygl27.28\\
\bottomrule
\end{tabular}
\label{tab:commonsensekl_ppl}
\end{table*}

\subsection{Fine-tuning Results}

Tables~\ref{tab:commonsensekl_common} and~\ref{tab:commonsensekl_ppl} report the QA accuracy and perplexity comparisons between \texttt{Ghosted Layers + FT} and \texttt{LinearPatch + FT} across four LLM backbones (LLaMA-2-7B, LLaMA-3-8B, LLaMA-3.1-8B, and DeepSeek-R1-Distill-LLaMA-8B) with 7 layers pruned.

\paragraph{QA accuracy.}
As shown in Table~\ref{tab:commonsensekl_common}, \texttt{Ghosted Layers + FT} attains the highest average accuracy across all four backbones under the same fine-tuning budget. The margin over the stronger \texttt{LinearPatch (Rotate) + FT} ranges from $+0.08$ on LLaMA-2-7B to $+1.04$ on LLaMA-3.1-8B, suggesting that the unconstrained operator continues to benefit from lightweight post-training, despite the fine-tuned \texttt{LinearPatch} variants being free to move out of the symmetric subspace during training.

\paragraph{Perplexity.}
Table~\ref{tab:commonsensekl_ppl} shows a more pronounced gap on the language modeling benchmarks. \texttt{Ghosted Layers + FT} achieves the lowest average perplexity across all four backbones, with relative reductions of $14.4\%$ (LLaMA-2-7B), $26.3\%$ (LLaMA-3-8B), $25.7\%$ (LLaMA-3.1-8B), and $30.9\%$ (DeepSeek-R1-Distill-LLaMA-8B) over the stronger \texttt{LinearPatch (Rotate) + FT}. This pattern is consistent with the interpretation that the closed-form $\mathbf{W}^{*}$ provides a lower-error initialization on the boundary alignment objective, and that starting distillation from this initialization yields better generative quality within the same training budget.

%% file: contents/appendix/a-exp-llms.tex
\begin{table*}[t]
\centering
\scriptsize
\caption{Zero-shot accuracy (\%) on nine commonsense QA benchmarks for three additional LLM backbones beyond those in the main paper: OLMo-2-7B, LLaMA-2-7B, and Qwen-3-14B. All methods are training-free and use 128 calibration sequences sampled from C4 with sequence length $T{=}2{,}048$. Pruning boundaries are selected via the LLM-Streamline criterion. AVG denotes the mean accuracy across all nine tasks. \texttt{Ghosted Layers} achieves the highest average accuracy across all three backbones and both pruning ratios.}
\vspace{3pt}
\begin{adjustbox}{width=1\columnwidth}
\begin{tabular}{l  l  l c c c c c c c c c  r }
\toprule

\textbf{Model} & $L_p/L_t$ & \textbf{Method }&  \textbf{ARC-E}& \textbf{ARC-C }& \textbf{HellaS} & \textbf{WinoG} & \textbf{BoolQ}  & \textbf{OBQA} & \textbf{RTE} & \textbf{CoPa} & \textbf{Race} & \textbf{AVG} $\uparrow$ \\
\midrule
\multirow{19}{*}{\begin{sideways}OLMo-2-7B\end{sideways}}
& 0/32 & Dense & 81.19 & 56.14 & 78.93 & 74.66 & 78.23 & 45.20 & 71.12 & 89.00 & 40.10 & 68.29\\
\cmidrule{2-13}
& 7/32 & Shortened LLaMA  & 71.34 & 43.26 & 68.16 & 65.75 & 65.20 & 41.20 & 62.09 & 79.00 & 33.30 & 58.81  \\
&7/32& LLM-Streamline &  66.41 & 40.78 & 64.20 & 70.32 & 69.63 & 36.00 & 71.48 & 82.00 & 37.70 & 59.84\\
&7/32& ShortGPT &  60.65 & 37.37 & 62.81 & 67.09 & 38.13 & 37.00 & 54.51 & 73.00 & 33.30 & 51.54\\
&7/32& Prune\&Comp &  43.56 & 26.37 & 44.68 & 57.30 & 54.68 & 28.80 & 49.10 & 69.00 & 27.08 & 44.51\\
&7/32& ReplaceMe (LS) &  65.70 & 39.59 & 63.14 & 70.96 & 70.21 & 36.40 & 70.04 & 81.00 & 38.09 & 59.46\\
&7/32& ReplaceMe (Cos) &  66.88 & 40.10 & 63.65 & 70.40 & 67.74 & 36.40 & 73.29 & 81.00 & 38.09 & 59.73\\
&7/32& Linear Patch (D) & 62.46 & 38.40 & 67.78 & 70.17 & 43.88 & 37.40 & 64.62 & 78.00 & 33.68 & 55.15\\
&7/32& Linear Patch (R) & 67.34 & 41.64 & 67.01 & 71.03 & 63.94 & 37.20 & 67.87 & 80.00 & 36.27 & 59.14 \\
&7/32& \jygl Ghost Layer (Ours) & \jygl 67.67 & \jygl 41.81 & \jygl 63.62 & \jygl 70.88 & \jygl 70.24 & \jygl 37.87 & \jygl 70.04 & \jygl 80.00 & \jygl 37.42 & \jygl 59.95\\

\cmidrule{2-13}
&11/32& Shortened LLaMA  & 53.45 & 30.29 & 52.32 & 54.70 & 62.23 & 32.20 & 53.79 & 66.00 & 31.58 & 48.51  \\
&11/32& LLM-Streamline &  50.17 & 33.96 & 52.64 & 67.01 & 56.33 & 33.60 & 78.34 & 71.00 & 30.43 & 52.61 \\
&11/32& ShortGPT &  36.78 & 25.43 & 33.59 & 51.85 & 38.32 & 26.80 & 50.54 & 66.00 & 25.93 & 39.47 \\
&11/32& Prune\&Comp &  28.79 & 25.68 & 27.36 & 50.75 & 39.27 & 27.00 & 47.65 & 60.00 & 20.86 & 36.37\\
&11/32& ReplaceMe (LS) &  46.34 & 31.91 & 46.22 & 65.90 & 68.04 & 31.40 & 73.65 & 70.00 & 30.33 & 51.53\\
&11/32& ReplaceMe (Cos) & 49.79 & 33.62 & 51.68 & 66.85 & 60.09 & 32.80 & 78.34 & 69.00 & 29.86 & 52.45 \\
&11/32& Linear Patch (D) & 39.86 & 30.55 & 42.43 & 56.99 & 51.19 & 34.80 & 71.84 & 64.00 & 26.99 & 46.52\\
&11/32& Linear Patch (R) &  50.38 & 35.15 & 50.71 & 61.48 & 59.30 & 37.40 & 76.90 & 74.00 & 30.53 & 52.87\\
&11/32& \jygl Ghost Layer (Ours) & \jygl 46.51 & \jygl 33.87 & \jygl 45.45 & \jygl 67.25 & \jygl 75.99 & \jygl 33.00 & \jygl 77.26 & \jygl 68.00 & \jygl 30.81 & \jygl 53.13 \\

\midrule

\multirow{19}{*}{\begin{sideways}LLaMA-2-7B\end{sideways}}
&0/32& Dense & 74.49 & 46.25 & 75.99 & 68.90 & 77.71 & 44.20 & 62.82 & 87.00 & 39.62 & 64.11 \\
\cmidrule{2-13}
&7/32& Shortened LLaMA  &  43.77 & 26.88 & 42.98 & 50.83 & 57.03 & 31.00 & 51.26 & 75.00 & 27.94 & 45.19\\
&7/32& LLM-Streamline &  48.61 & 32.76 & 56.15 & 64.48 & 62.17 & 32.80 & 57.40 & 77.00 & 32.25 & 51.51 \\
&7/32& ShortGPT &  48.61 & 32.76 & 56.15 & 64.48 & 62.17 & 32.80 & 57.40 & 77.00 & 32.25 & 51.51 \\
&7/32& Prune\&Comp &  48.44 & 31.91 & 53.81 & 59.83 & 62.17 & 35.80 & 57.04 & 83.00 & 31.96 & 51.55 \\
&7/32& ReplaceMe (LS) &  50.55 & 35.07 & 54.93 & 64.64 & 65.35 & 33.40 & 58.48 & 74.00 & 35.79 & 52.47 \\
&7/32& ReplaceMe (Cos) &  49.92 & 33.79 & 57.18 & 64.88 & 62.17 & 33.40 & 62.09 & 76.00 & 33.30 & 52.53 \\
&7/32& Linear Patch (D) & 55.13 & 34.56 & 57.12 & 63.46 & 62.17 & 35.60 & 55.96 & 78.00 & 35.02 & 53.00 \\
&7/32& Linear Patch (R) &  55.22 & 33.70 & 57.92 & 65.19 & 62.14 & 35.60 & 55.60 & 78.00 & 34.64 & 53.11 \\
&7/32& \jygl Ghost Layer (Ours) & \jygl 52.99 & \jygl 33.79 & \jygl 58.01 & \jygl 66.38 & \jygl 70.28 & \jygl 35.80 & \jygl 53.43 & \jygl 76.00 & \jygl 37.70 & \jygl 53.82\\
\cmidrule{2-13}
&11/32& Shortened LLaMA  &  44.95 & 25.09 & 42.44 & 51.07 & 47.77 & 30.40 & 54.87 & 72.00 & 27.08 & 43.96\\
&11/32& LLM-Streamline &  42.59 & 32.59 & 48.43 & 62.35 & 62.23 & 30.40 & 58.48 & 78.00 & 30.33 & 49.49\\
&11/32& ShortGPT & 42.80 & 30.46 & 44.50 & 60.46 & 62.26 & 35.40 & 47.29 & 72.00 & 29.57 & 47.19 \\
&11/32& Prune\&Comp & 42.68 & 28.16 & 42.79 & 57.70 & 62.26 & 30.00 & 52.71 & 78.00 & 25.93 & 46.69 \\
&11/32& ReplaceMe (LS) &  40.99 & 32.17 & 46.29 & 60.54 & 74.59 & 29.80 & 56.68 & 73.00 & 32.34 & 49.60 \\
&11/32& ReplaceMe (Cos) & 38.80 & 32.94 & 46.57 & 57.62 & 62.14 & 30.40 & 60.29 & 74.00 & 31.58 & 48.26 \\
&11/32& Linear Patch (D) & 48.95 & 33.53 & 53.35 & 63.06 & 62.20 & 34.00 & 59.93 & 77.00 & 34.07 & 51.79 \\
&11/32& Linear Patch (R) & 49.28 & 32.51 & 52.73 & 62.98 & 62.17 & 33.20 & 61.73 & 77.00 & 33.88 & 51.72  \\
&11/32& \jygl Ghost Layer (Ours) & \jygl 49.81 & \jygl 31.83 & \jygl 49.28 & \jygl 65.82 & \jygl 73.70 & \jygl 33.80 & \jygl 57.04 & \jygl 72.00 & \jygl 33.01 & \jygl 51.81\\

\midrule

\multirow{19}{*}{\begin{sideways}Qwen-3-14B\end{sideways}}
&0/40& Dense &  82.83 & 60.24 & 78.82 & 72.85 & 89.30 & 46.20 & 77.62 & 90.00 & 43.16 & 71.22\\
\cmidrule{2-13}
&13/40& Shortened LLaMA  & 31.35 & 28.16 & 43.93 & 49.96 & 51.01 & 29.20 & 54.51 & 72.00 & 29.38 & 43.27  \\
&13/40& LLM-Streamline & 33.80 & 31.31 & 32.16 & 56.91 & 62.17 & 30.00 & 48.74 & 64.00 & 25.26 & 42.71\\
&13/40& ShortGPT & 29.59 & 26.71 & 37.82 & 49.49 & 62.02 & 28.80 & 49.82 & 61.00 & 24.88 & 41.13  \\
&13/40& Prune\&Comp & 33.80 & 31.31 & 32.16 & 56.91 & 62.17 & 30.00 & 48.74 & 64.00 & 25.26 & 42.71\\
&13/40& ReplaceMe (LS) & 32.32 & 27.47 & 30.69 & 58.17 & 78.17 & 27.80 & 56.32 & 64.00 & 26.41 & 44.59  \\
&13/40& ReplaceMe (Cos) & 33.16 & 30.72 & 32.48 & 57.30 & 62.17 & 30.00 & 49.10 & 64.00 & 25.26 & 42.69 \\
&13/40& Linear Patch (D) & 25.08 & 22.70 & 25.04 & 49.57 & 37.83 & 27.60 & 52.71 & 55.00 & 25.93 & 35.72\\
&13/40& Linear Patch (R) & 25.08 & 22.70 & 25.04 & 49.57 & 37.83 & 27.60 & 52.71 & 55.00 & 25.93 & 35.72  \\
&13/40& \jygl Ghost Layer (Ours) & \jygl 35.65 & \jygl 28.92 & \jygl 33.91 & \jygl 60.46 & \jygl 72.84 & \jygl 29.00 & \jygl 48.38 & \jygl 66.00 & \jygl 28.04 & \jygl 44.80 \\
\cmidrule{2-13}
&15/40& Shortened LLaMA  &  35.27 & 23.04 & 34.16 & 50.36 & 52.87 & 27.40 & 55.23 & 59.00 & 26.70 & 40.45\\
&15/40& LLM-Streamline & 39.48 & 27.39 & 35.85 & 48.22 & 38.50 & 29.00 & 50.18 & 61.00 & 21.05 & 38.96 \\
&15/40& ShortGPT & 29.04 & 26.54 & 33.49 & 48.70 & 61.68 & 28.60 & 52.35 & 55.00 & 23.83 & 39.91\\
&15/40& Prune\&Comp & 39.48 & 27.39 & 35.85 & 48.22 & 38.50 & 29.00 & 50.18 & 61.00 & 21.05 & 38.96\\
&15/40& ReplaceMe (LS) & 38.59 & 25.68 & 34.89 & 51.46 & 37.80 & 29.40 & 46.57 & 60.00 & 26.99 & 39.04 \\
&15/40& ReplaceMe (Cos) & 42.72 & 27.82 & 38.41 & 47.51 & 38.26 & 30.00 & 50.18 & 62.00 & 23.83 & 40.08\\
&15/40& Linear Patch (D) & 41.16 & 28.67 & 27.20 & 50.20 & 40.95 & 27.20 & 51.26 & 50.00 & 25.93 & 38.06 \\
&15/40& Linear Patch (R) & 37.58 & 26.96 & 35.69 & 50.67 & 43.58 & 29.60 & 49.46 & 55.00 & 25.93 & 39.39\\
&15/40& \jygl Ghost Layer (Ours) & \jygl 44.78 & \jygl 27.39 & \jygl 43.57 & \jygl 53.91 & \jygl 43.39 & \jygl 30.60 & \jygl 51.99 & \jygl 67.00 & \jygl 29.86 & \jygl 43.61 \\
\bottomrule
\end{tabular}
\end{adjustbox}
\label{tab:commonsense_additional_llms}
\end{table*}

\label{sec:experiments}

To evaluate the generality of \texttt{Ghosted Layers} beyond the backbones reported in the main paper, we extend our zero-shot QA experiments to three additional LLMs: OLMo-2-7B~\citep{olmo2}, LLaMA-2-7B~\citep{touvron2023llama2}, and Qwen-3-14B~\citep{qwen3}. These models span different pretraining corpora, model generations, and scales, allowing us to assess whether the recovery quality of \texttt{Ghosted Layers} transfers across architectural and training variations. All other experimental settings---including the calibration corpus (128 sequences from C4, $T{=}2{,}048$), pruning criterion (LLM-Streamline~\citep{chen2024streamline}), and evaluation protocol across nine commonsense QA benchmarks---are identical to the main experiments in Table~\ref{tab:commonsense_main}. For Qwen-3-14B, which has $L{=}40$ layers, we report results at $13/40$ and $15/40$ pruning ratios to match the relative pruning depths of $7/32$ and $11/32$ used for the $32$-layer backbones.

Table~\ref{tab:commonsense_additional_llms} reports the results. \texttt{Ghosted Layers} attains the highest average accuracy in all six settings across the three backbones and two pruning ratios, consistent with the trend observed in the main paper. Notably, the margin over the strongest training-free baseline tends to widen on Qwen-3-14B under aggressive pruning ($15/40$), where \texttt{Ghosted Layers} achieves $43.61$ AVG accuracy versus $40.08$ for the next-best method (\texttt{ReplaceMe (Cos)}). The relative ordering among baselines is similar to that in Table~\ref{tab:commonsense_main}, indicating that \texttt{Ghosted Layers} generalizes favorably across architectures with different pretraining corpora, scales, and design choices, including grouped-query attention variants such as Qwen-3.

%% file: contents/appendix/a-exp-wiki.tex
\label{sec:calib_dataset}

To evaluate the robustness of \texttt{Ghosted Layers} to the choice of calibration corpus, we repeat our main zero-shot QA experiments with the calibration dataset replaced from C4~\citep{raffel2020c4} to WikiText-2~\citep{wiki}. All other settings, including the number of calibration sequences (128), sequence length ($T{=}2{,}048$), pruning criterion (LLM-Streamline~\citep{chen2024streamline}), and evaluation protocol across nine commonsense QA benchmarks, are identical to the main experiments reported in Table~\ref{tab:commonsense_main}.

Table~\ref{tab:commonsense_wiki} reports the results across three LLM backbones and two pruning ratios. \texttt{Ghosted Layers} attains the highest average accuracy in all six settings, consistent with the C4-calibrated results in the main paper. The average accuracy of \texttt{Ghosted Layers} changes by at most $0.36$ points when the calibration corpus is switched from C4 to WikiText-2 (e.g., $60.01 \to 59.65$ on LLaMA-3.1-8B at 7-layer pruning), indicating that the closed-form operator is not sensitive to the specific calibration distribution. Notably, the gap over the strongest training-free baseline widens under more aggressive 11-layer pruning, where the boundary activation mismatch is larger, suggesting that the unconstrained formulation is especially beneficial when the gap to reconstruct grows. This robustness to the calibration source is a practical advantage: practitioners can use whichever in-domain corpus is most readily available without retuning the recovery operator.

\begin{table*}[t]
\caption{Zero-shot accuracy (\%) on nine commonsense QA benchmarks for 7-layer and 11-layer pruning across three LLM backbones, using \textbf{WikiText-2} as the calibration corpus instead of C4. All other settings match Table~\ref{tab:commonsense_main}: 128 calibration sequences with sequence length $T{=}2{,}048$ and pruning boundaries selected via LLM-Streamline. AVG denotes the mean accuracy across all nine tasks. \texttt{Ghosted Layers} attains the highest average accuracy in all six settings, demonstrating robustness to the choice of calibration corpus.}
\vspace{3pt}
\centering
\scriptsize
\begin{adjustbox}{width=1\columnwidth}
\begin{tabular}{l  l  l c c c c c c c c c  r }
\toprule

\textbf{Model} & \textbf{$L_p/L_t$} & \textbf{Method} &  \textbf{ARC-E} & \textbf{ARC-C} & \textbf{HellaS} & \textbf{WinoG} & \textbf{BoolQ}  & \textbf{OBQA} & \textbf{RTE} & \textbf{CoPa} & \textbf{Race} & \textbf{AVG} $\uparrow$ \\
\midrule
\multirow{19}{*}{\begin{sideways}LLaMA-3-8B\end{sideways}}
& 0/32 & Dense          &  77.78 & 53.24 & 79.16 & 72.53 & 81.38 & 45.00 & 69.68 & 89.00 & 40.19 & 67.55\\
\cmidrule{2-13}
&7/32& Shortened LLaMA  &  58.88 & 32.68 & 59.17 & 54.06 & 45.44 & 34.40 & 54.15 & 75.00 & 30.72 & 49.39\\
&7/32& LLM-Streamline &  39.65 & 29.18 & 33.23 & 55.41 & 38.04 & 29.80 & 57.40 & 60.00 & 24.02 & 40.75  \\
&7/32& ShortGPT &  39.65 & 29.18 & 33.23 & 55.41 & 38.04 & 29.80 & 57.40 & 60.00 & 24.02 & 40.75\\
&7/32& Prune\&Comp &  42.09 & 29.61 & 41.36 & 59.43 & 51.04 & 33.40 & 59.93 & 68.00 & 27.27 & 45.79\\
&7/32& ReplaceMe (LS) &  64.44 & 43.69 & 64.32 & 72.45 & 67.65 & 37.00 & 68.95 & 75.00 & 35.41 & 58.77\\
&7/32& ReplaceMe (Cos) &  50.93 & 34.73 & 49.71 & 66.14 & 39.02 & 34.00 & 63.18 & 66.00 & 29.67 & 48.15\\
&7/32& Linear Patch (D) & 43.18 & 31.66 & 43.14 & 60.62 & 57.31 & 33.80 & 64.98 & 68.00 & 28.52 & 47.91 \\
&7/32& Linear Patch (R) &  51.14 & 34.13 & 49.24 & 63.14 & 57.25 & 34.00 & 67.51 & 67.00 & 29.76 & 50.35\\
&7/32&  \jygl Ghost Layer (Ours) &  \jygl 65.70 &  \jygl 43.86 &  \jygl 66.33 &  \jygl 72.30 &  \jygl 69.60 &  \jygl 38.40 &  \jygl 67.87 &  \jygl 78.00 &  \jygl 36.94 &  \jygl 59.89\\
\cmidrule{2-13}
&11/32& Shortened LLaMA  & 45.50 & 29.69 & 48.81 & 52.64 & 59.42 & 29.20 & 49.82 & 67.00 & 26.60 & 45.41 \\
&11/32& LLM-Streamline & 38.17 & 29.86 & 32.93 & 56.75 & 56.09 & 30.20 & 70.04 & 57.00 & 27.27 & 44.26   \\
&11/32& ShortGPT &  38.17 & 29.86 & 32.93 & 56.75 & 56.09 & 30.20 & 70.04 & 57.00 & 27.27 & 44.26 \\
&11/32& Prune\&Comp & 37.25 & 28.16 & 42.86 & 57.14 & 62.81 & 29.20 & 56.32 & 62.00 & 25.07 & 44.53  \\
&11/32& ReplaceMe (LS) & 44.36 & 34.98 & 45.10 & 67.09 & 67.06 & 31.40 & 64.26 & 72.00 & 29.09 & 50.59 \\
&11/32& ReplaceMe (Cos) & 42.68 & 33.19 & 37.64 & 57.77 & 61.90 & 29.00 & 62.09 & 57.00 & 29.86 & 45.68 \\
&11/32& Linear Patch (D)  & 46.51 & 35.58 & 47.68 & 60.77 & 70.89 & 32.00 & 69.31 & 69.00 & 30.24 & 51.33 \\
&11/32& Linear Patch (R) & 47.81 & 34.64 & 44.03 & 60.85 & 76.09 & 31.60 & 66.43 & 67.00 & 31.96 & 51.16  \\
&11/32&  \jygl Ghost Layer (Ours) &  \jygl 49.12 &  \jygl 34.81 &  \jygl 51.16 &  \jygl 68.98 &  \jygl 77.31 &  \jygl 31.60 &  \jygl 66.43 &  \jygl 74.00 &  \jygl 32.34 &  \jygl 53.97\\
\midrule
\multirow{19}{*}{\begin{sideways}LLaMA-3.1-8B\end{sideways}}
&0/32 & Dense          &  81.31 & 53.50 & 78.90 & 73.72 & 82.02 & 44.80 & 69.68 & 87.00 & 39.23 & 67.80\\
\cmidrule{2-13}
&7/32& Shortened LLaMA  &   61.36 & 32.94 & 59.52 & 53.91 & 43.70 & 35.40 & 50.90 & 76.00 & 31.20 & 49.44\\
&7/32& LLM-Streamline & 44.15 & 33.02 & 33.40 & 56.83 & 38.20 & 32.60 & 58.12 & 61.00 & 26.03 & 42.59\\
&7/32& ShortGPT &  58.29 & 42.15 & 64.93 & 68.27 & 62.02 & 34.60 & 69.31 & 80.00 & 34.35 & 57.10\\
&7/32& Prune\&Comp & 45.20 & 30.46 & 44.05 & 58.41 & 51.41 & 34.80 & 60.65 & 67.00 & 27.46 & 46.60\\
&7/32& ReplaceMe (LS) & 66.67 & 43.09 & 64.23 & 73.01 & 65.72 & 35.00 & 71.12 & 77.00 & 35.41 & 59.03 \\
&7/32& ReplaceMe (Cos) & 55.81 & 36.09 & 49.86 & 63.69 & 39.33 & 36.00 & 60.29 & 69.00 & 30.91 & 49.00 \\
&7/32& Linear Patch (D) & 48.06 & 32.76 & 46.46 & 61.40 & 60.34 & 35.20 & 68.23 & 68.00 & 29.00 & 49.94 \\
&7/32& Linear Patch (R) & 58.00 & 37.54 & 54.21 & 64.17 & 60.49 & 35.40 & 69.68 & 69.00 & 29.09 & 53.06 \\
&7/32&  \jygl Ghost Layer (Ours) &  \jygl 68.69 &  \jygl 42.58 &  \jygl 66.21 &  \jygl 72.30 &  \jygl 66.09 &  \jygl 36.80 &  \jygl 71.48 &  \jygl 75.00 &  \jygl 37.70 &  \jygl 59.65\\
\cmidrule{2-13}
&11/32& Shortened LLaMA  &  48.53 & 29.61 & 49.52 & 52.72 & 59.36 & 29.60 & 51.26 & 64.00 & 26.51 & 45.68\\
&11/32& LLM-Streamline  & 39.65 & 30.29 & 31.47 & 56.51 & 55.08 & 30.20 & 69.68 & 61.00 & 28.52 & 44.71  \\
&11/32& ShortGPT & 39.65 & 30.29 & 31.47 & 56.51 & 55.08 & 30.20 & 69.68 & 61.00 & 28.52 & 44.71  \\
&11/32& Prune\&Comp &  42.47 & 29.27 & 42.88 & 56.20 & 63.79 & 29.00 & 59.93 & 61.00 & 27.37 & 45.77  \\
&11/32& ReplaceMe (LS) &  46.00 & 35.32 & 44.48 & 66.85 & 65.66 & 32.00 & 70.40 & 75.00 & 29.57 & 51.70 \\
&11/32& ReplaceMe (Cos) & 43.69 & 32.17 & 36.44 & 57.30 & 58.47 & 29.40 & 68.95 & 62.00 & 31.48 & 46.66 \\
&11/32& Linear Patch (D) & 50.51 & 36.77 & 48.29 & 62.51 & 70.61 & 31.60 & 72.56 & 68.00 & 29.76 & 52.29\\
&11/32& Linear Patch (R) &  50.80 & 35.15 & 46.60 & 61.88 & 75.29 & 31.00 & 70.04 & 70.00 & 30.72 & 52.39 \\
&11/32&  \jygl Ghost Layer (Ours) &  \jygl 50.46 &  \jygl 34.30 &  \jygl 50.62 &  \jygl 68.35 &  \jygl 74.65 &  \jygl 32.40 &  \jygl 67.87 &  \jygl 72.00 &  \jygl 33.11 &  \jygl 53.75 \\
\midrule
\multirow{19}{*}{\begin{sideways}DeepSeek-R1-Distill-LLaMA-8B\end{sideways}}
&0/32& Dense & 65.91 & 42.49 & 74.35 & 67.88 & 82.91 & 41.40 & 69.68 & 89.00 & 41.53 & 63.91\\
\cmidrule{2-13}
&7/32& Shortened LLaMA  & 48.48 & 30.97 & 50.84 & 50.75 & 62.72 & 30.60 & 58.48 & 69.00 & 31.67 & 48.17 \\
&7/32& LLM-Streamline &   49.12 & 37.12 & 55.98 & 63.22 & 77.06 & 34.00 & 74.73 & 71.00 & 33.21 & 55.05 \\
&7/32& ShortGPT &  49.12 & 37.12 & 55.98 & 63.22 & 77.06 & 34.00 & 74.73 & 71.00 & 33.21 & 55.05\\
&7/32& Prune\&Comp &  47.22 & 31.48 & 53.75 & 55.88 & 72.60 & 32.20 & 65.70 & 63.00 & 32.25 & 50.45\\
&7/32&  ReplaceMe (LS) &  54.34 &  37.37 &  60.04 &  66.30 &  73.61 &  33.40 &  72.56 &  81.00 &  40.10 &  57.64  \\
&7/32& ReplaceMe (Cos) & 51.47 & 36.09 & 58.12 & 62.67 & 76.27 & 31.00 & 75.45 & 71.00 & 34.16 & 55.14  \\
&7/32& Linear Patch (D) & 54.71 & 36.43 & 60.62 & 64.80 & 81.35 & 33.40 & 74.73 & 72.00 & 36.08 & 57.12\\
&7/32& Linear Patch (R) & 53.87 & 36.52 & 60.77 & 66.14 & 79.66 & 34.60 & 74.73 & 74.00 & 37.51 & 57.53 \\
&7/32& \jygl Ghost Layer (Ours) & \jygl 54.12 & \jygl 37.12 & \jygl 59.54 & \jygl 67.32 & \jygl 70.95 & \jygl 33.40 & \jygl 76.17 & \jygl 81.00 & \jygl 40.04 & \jygl 57.74 \\
\cmidrule{2-13}
&11/32& Shortened LLaMA  & 39.52 & 26.28 & 37.90 & 50.43 & 40.98 & 24.80 & 52.35 & 61.00 & 25.36 & 39.85 \\
&11/32& LLM-Streamline & 38.17 & 29.86 & 32.93 & 56.75 & 56.09 & 30.20 & 70.04 & 57.00 & 27.27 & 44.26 \\
&11/32& ShortGPT &  37.12 & 31.57 & 38.72 & 55.25 & 75.23 & 27.40 & 64.26 & 63.00 & 26.32 & 46.54\\
&11/32& Prune\&Comp &  35.98 & 29.35 & 36.54 & 51.93 & 64.53 & 25.60 & 59.57 & 56.00 & 24.69 & 42.69 \\
&11/32& ReplaceMe (LS) &  40.49 & 32.34 & 44.05 & 61.56 & 81.80 & 31.80 & 74.37 & 68.00 & 32.15 & 51.84\\
&11/32& ReplaceMe (Cos) &  38.47 & 31.06 & 40.85 & 54.22 & 77.77 & 27.20 & 67.15 & 65.00 & 30.14 & 47.98 \\
&11/32& Linear Patch (D) & 45.12 & 34.22 & 43.12 & 57.85 & 77.40 & 32.00 & 67.87 & 61.00 & 28.61 & 49.69\\
&11/32& Linear Patch (R) & 43.52 & 32.42 & 44.96 & 59.91 & 77.98 & 30.60 & 69.68 & 65.00 & 31.00 & 50.56 \\
&11/32&  \jygl Ghost Layer (Ours) &  \jygl 43.94 &  \jygl 33.36 &  \jygl 46.43 &  \jygl 62.83 &  \jygl 76.67 &  \jygl 33.20 &  \jygl 76.90 &  \jygl 71.00 &  \jygl 34.45 &  \jygl 53.20\\
\bottomrule
\end{tabular}
\end{adjustbox}
\label{tab:commonsense_wiki}
\end{table*}

%% file: contents/appendix/a-eff.tex
\label{app:efficiency}

This section details the methodology and full experimental results for the inference efficiency measurements reported in Table~\ref{tab:efficiency_llama31}.

\subsection{Setup}

\paragraph{Hardware.}
All measurements are conducted on a single NVIDIA A40 48GB GPU.

\paragraph{Model configuration.}
We evaluate LLaMA-3.1-8B under two pruning ratios, $n = 7$ and $n = 11$ out of $L = 32$ Transformer layers, in \texttt{float16}. Pruning boundaries are selected via LLM-Streamline~\citep{chen2024streamline} using 128 calibration sequences of length $T = 2{,}048$ sampled from the C4 training split. The selected boundaries are $[23, 30)$ for $n = 7$ and $[19, 30)$ for $n = 11$.

\paragraph{Latency protocol.}
For each configuration, we measure prefill latency with $3$ warmup iterations followed by $10$ timed runs. Each run performs a single forward pass with \texttt{use\_cache=False}, wrapped in \texttt{torch.cuda.synchronize()} before and after timing. We report the mean and standard deviation across the $10$ runs.

\paragraph{GPU protocol.}
Peak GPU memory is measured via \texttt{torch.cuda.max\_memory\_allocated()}, which records the maximum activated tensor footprint during the forward pass and is unaffected by PyTorch's caching allocator reserving unused memory blocks. Before each measurement, we reset the peak counter via \texttt{torch.cuda.reset\_peak\_memory\_stats()} and clear the allocator's cache through three cycles of \texttt{gc.collect()}, \texttt{torch.cuda.empty\_cache()}, and \texttt{torch.cuda.ipc\_collect()}, ensuring that the reported peak reflects only the memory required by the model under measurement.

\begin{table}[h]
\centering
\scriptsize
\caption{Prefill latency (ms) on LLaMA-3.1-8B with $n = 7$ pruned layers, across sequence lengths and batch sizes. Values are mean $\pm$ standard deviation over 10 runs with 3 warmup iterations; OOM indicates the configuration exceeded available GPU memory. \texttt{Ghosted Layers} consistently matches \texttt{LinearPatch} in latency across all configurations.}
\label{tab:latency_grid_n7}
\begin{tabular}{llrrrrr}
\toprule
Seq & Batch & Dense & Pruned (Streamline) & LinearPatch (Diag) & LinearPatch (Rotate) & Ours \\
\midrule
512  & 1  & $95.9 \pm 0.2$    & $76.6 \pm 0.4$    & $77.8 \pm 0.4$    & $78.8 \pm 0.8$    & $78.1 \pm 0.2$    \\
512  & 4  & $352.3 \pm 0.6$   & $281.8 \pm 1.6$   & $284.2 \pm 1.9$   & $285.7 \pm 1.6$   & $285.5 \pm 1.3$   \\
512  & 16 & $1329.0 \pm 1.0$  & $1055.4 \pm 3.1$  & $1064.6 \pm 0.9$  & $1067.8 \pm 2.4$  & $1071.0 \pm 3.0$  \\
\midrule
1024 & 1  & $185.7 \pm 1.0$   & $148.0 \pm 2.2$   & $148.1 \pm 1.0$   & $148.7 \pm 1.0$   & $149.4 \pm 0.9$   \\
1024 & 4  & $704.3 \pm 1.4$   & $558.9 \pm 1.1$   & $564.1 \pm 1.0$   & $564.8 \pm 1.1$   & $564.8 \pm 1.2$   \\
1024 & 16 & $2628.7 \pm 3.3$  & $2086.9 \pm 2.3$  & $2120.5 \pm 3.6$  & $2119.8 \pm 1.1$  & $2116.3 \pm 2.4$  \\
\midrule
2048 & 1  & $362.6 \pm 1.6$   & $287.8 \pm 1.4$   & $291.4 \pm 1.6$   & $291.7 \pm 1.7$   & $291.9 \pm 1.7$   \\
2048 & 4  & $1359.8 \pm 0.8$  & $1079.8 \pm 0.5$  & $1093.9 \pm 2.8$  & $1090.5 \pm 2.3$  & $1091.5 \pm 1.6$  \\
2048 & 16 & $5368.3 \pm 4.6$  & $4261.4 \pm 3.5$  & $4319.7 \pm 3.4$  & $4319.7 \pm 4.8$  & $4299.3 \pm 7.6$  \\
\midrule
4096 & 1  & $740.2 \pm 1.7$   & $587.8 \pm 1.0$   & $595.4 \pm 0.8$   & $594.5 \pm 1.3$   & $594.4 \pm 1.0$   \\
4096 & 4  & $2778.7 \pm 2.6$  & $2206.2 \pm 2.3$  & $2230.5 \pm 1.4$  & $2232.0 \pm 2.8$  & $2231.9 \pm 1.6$  \\
4096 & 16 & $11121.7 \pm 5.1$ & $8813.4 \pm 13.2$ & $8921.2 \pm 9.7$  & $8935.4 \pm 5.9$  & $8923.2 \pm 2.7$  \\
\midrule
8192 & 1  & $1506.0 \pm 0.5$  & $1193.2 \pm 1.2$  & $1205.3 \pm 1.3$  & $1206.2 \pm 0.9$  & $1205.5 \pm 0.5$  \\
8192 & 4  & $5952.6 \pm 4.2$  & $4718.1 \pm 2.0$  & $4773.3 \pm 5.3$  & $4771.7 \pm 3.8$  & $4767.2 \pm 3.6$  \\
\bottomrule
\end{tabular}
\end{table}

\begin{table}[h]
\centering
\scriptsize
\caption{Prefill latency (ms) on LLaMA-3.1-8B with $n = 11$ pruned layers, across sequence lengths and batch sizes. Same measurement protocol as Table~\ref{tab:latency_grid_n7}.}
\label{tab:latency_grid_n11}
\begin{tabular}{llrrrrr}
\toprule
Seq & Batch & Dense & Pruned (Streamline) & LinearPatch (Diag) & LinearPatch (Rotate) & Ours \\
\midrule
512  & 1  & $96.5 \pm 0.2$    & $65.5 \pm 0.2$    & $73.6 \pm 9.5$    & $67.0 \pm 0.2$    & $66.4 \pm 0.4$    \\
512  & 4  & $354.3 \pm 2.1$   & $238.6 \pm 0.7$   & $243.5 \pm 0.3$   & $243.3 \pm 1.5$   & $243.0 \pm 1.6$   \\
512  & 16 & $1334.9 \pm 2.0$  & $894.2 \pm 4.1$   & $913.0 \pm 2.9$   & $910.1 \pm 1.2$   & $913.4 \pm 3.1$   \\
\midrule
1024 & 1  & $186.6 \pm 1.7$   & $124.8 \pm 0.5$   & $127.1 \pm 1.1$   & $127.0 \pm 1.9$   & $126.8 \pm 0.9$   \\
1024 & 4  & $706.8 \pm 1.7$   & $476.2 \pm 0.9$   & $479.0 \pm 1.4$   & $479.8 \pm 0.3$   & $480.0 \pm 0.4$   \\
1024 & 16 & $2636.6 \pm 2.8$  & $1785.9 \pm 6.2$  & $1804.5 \pm 2.0$  & $1808.8 \pm 2.3$  & $1804.1 \pm 3.1$  \\
\midrule
2048 & 1  & $363.5 \pm 1.2$   & $244.4 \pm 1.4$   & $248.1 \pm 1.1$   & $248.1 \pm 1.0$   & $247.6 \pm 1.1$   \\
2048 & 4  & $1359.3 \pm 0.9$  & $917.4 \pm 2.0$   & $929.0 \pm 1.3$   & $929.9 \pm 1.0$   & $929.6 \pm 0.5$   \\
2048 & 16 & $5349.4 \pm 4.3$  & $3607.1 \pm 4.2$  & $3675.6 \pm 7.1$  & $3681.4 \pm 4.6$  & $3674.7 \pm 8.1$  \\
\midrule
4096 & 1  & $739.8 \pm 1.1$   & $497.9 \pm 1.0$   & $505.0 \pm 0.8$   & $505.6 \pm 1.0$   & $505.8 \pm 0.8$   \\
4096 & 4  & $2778.2 \pm 1.9$  & $1867.5 \pm 2.9$  & $1900.7 \pm 1.2$  & $1903.0 \pm 2.6$  & $1901.6 \pm 2.0$  \\
4096 & 16 & $11115.8 \pm 8.0$ & $7479.6 \pm 15.8$ & $7600.3 \pm 8.1$  & $7618.9 \pm 5.8$  & $7608.7 \pm 5.2$  \\
\midrule
8192 & 1  & $1507.1 \pm 0.4$  & $1015.3 \pm 0.7$  & $1026.4 \pm 0.8$  & $1027.7 \pm 1.2$  & $1027.1 \pm 0.7$  \\
8192 & 4  & $5945.9 \pm 2.1$  & $4017.7 \pm 3.1$  & $4059.2 \pm 1.8$  & $4066.5 \pm 4.5$  & $4063.6 \pm 4.9$  \\
\bottomrule
\end{tabular}
\end{table}

\subsection{Latency Analysis}

\paragraph{Aggregate metrics.}
Table~\ref{tab:efficiency_llama31} in the main paper reports the aggregate GPU memory, prefill latency, accuracy, and perplexity at sequence length $2{,}048$ and batch size $1$. \texttt{LinearPatch} (both Diag and Rotate variants) and \texttt{Ghosted Layers} exhibit nearly identical wall-clock latency, reflecting the structural equivalence established in Section~\ref{sec:analysis}: both reduce to a single $C \times C$ matrix multiplication at the boundary. At $n = 7$, \texttt{Ghosted Layers} ($291.9$\,ms) and \texttt{LinearPatch (Rotate)} ($291.7$\,ms) are indistinguishable within measurement noise, and they remain within $0.5$\,ms of each other at $n = 11$. 

\paragraph{Grid sweep across sequence lengths and batch sizes.}
To confirm that the cost equivalence holds beyond the representative configuration, we sweep prefill latency across sequence lengths $\{512, 1024, 2048, 4096, 8192\}$ and batch sizes $\{1, 4, 16\}$. Tables~\ref{tab:latency_grid_n7} and~\ref{tab:latency_grid_n11} report the $n = 7$ and $n = 11$ grids, respectively. Across both pruning ratios and all evaluated configurations, the latency of \texttt{Ghosted Layers} matches \texttt{LinearPatch} to within measurement noise and preserves the speedup gained from layer removal regardless of sequence length or batch size. This confirms that the inference-cost equivalence established analytically in Section~\ref{sec:analysis} holds robustly in practice.

%% file: contents/appendix/a-svd-computation.tex
\label{app:solver_comparison}
The closed-form operator $\mathbf{M}^{*} = \mathbf{X}_{\mathrm{pre}}^{\dagger}\boldsymbol{\Delta}$ in Theorem~\ref{thm:main_main} can be obtained either by directly inverting the thin SVD of $\mathbf{X}_{\mathrm{pre}}$ via \colorbox{gray!15}{\texttt{torch.linalg.svd}}, or by solving the regularized normal equations via \colorbox{gray!15}{\texttt{torch.linalg.solve}}. The latter yields a Tikhonov-regularized least-squares solution that converges to the unregularized pseudoinverse as $\epsilon \to 0$~\citep{golub2013matrix}; with the small $\epsilon = 10^{-6}$ used throughout, the two procedures coincide numerically when $\mathbf{X}_{\mathrm{pre}}$ has full column rank, which holds with high probability under $T_{\mathcal{D}} \gg C$. Table~\ref{tab:solver_comparison} empirically confirms this numerical equivalence.

\paragraph{SVD-based.} Given the thin SVD $\mathbf{X}_{\mathrm{pre}} = \mathbf{U}\boldsymbol{\Sigma}\mathbf{V}^{\top}$ with $\mathbf{U} \in \mathbb{R}^{T_{\mathcal{D}} \times C}$, $\boldsymbol{\Sigma} \in \mathbb{R}^{C \times C}$, $\mathbf{V} \in \mathbb{R}^{C \times C}$ (the reduced form returned by {\texttt{torch.linalg.svd}} with \texttt{full\_matrices=False}), the Moore--Penrose pseudoinverse is $\mathbf{X}_{\mathrm{pre}}^{\dagger} = \mathbf{V}\boldsymbol{\Sigma}^{+}\mathbf{U}^{\top}$, yielding
\begin{equation}
    \mathbf{M}^{*} = \mathbf{V}\boldsymbol{\Sigma}^{+}\mathbf{U}^{\top}\boldsymbol{\Delta},
\end{equation}
where $\boldsymbol{\Sigma}^{+}$ truncates singular values below $10^{-6} \cdot \sigma_{\max}$ to zero for numerical stability~\citep{higham2002accuracy}. Internally, {\texttt{torch.linalg.svd}} computes the decomposition through a Jacobi-based driver (\texttt{gesvdj}) with a QR-based fallback (\texttt{gesvd}) on CUDA. This procedure is robust to rank deficiency, but explicitly materializes $\mathbf{U} \in \mathbb{R}^{T_{\mathcal{D}} \times C}$, whose size scales linearly with the calibration length $T_{\mathcal{D}}$.

\paragraph{Solver-based.} The same minimum-norm solution can be obtained by solving the regularized normal equations
\begin{equation}
    (\mathbf{X}_{\mathrm{pre}}^{\top}\mathbf{X}_{\mathrm{pre}} + \epsilon \mathbf{I})\,\mathbf{M}^{*} = \mathbf{X}_{\mathrm{pre}}^{\top}\boldsymbol{\Delta},
    \qquad \epsilon = 10^{-6},
    \label{eq:normal_eq}
\end{equation}
via {\texttt{torch.linalg.solve}}, which solves the square linear system $\mathbf{A}\mathbf{X} = \mathbf{B}$ for invertible $\mathbf{A}$~\citep{golub2013matrix}. The small ridge term $\epsilon \mathbf{I}$ ensures the system remains well-conditioned and invertible, which is standard practice for numerically stable least-squares solves~\citep{higham2002accuracy}. Both accumulators $\mathbf{X}_{\mathrm{pre}}^{\top}\mathbf{X}_{\mathrm{pre}}$ and $\mathbf{X}_{\mathrm{pre}}^{\top}\boldsymbol{\Delta}$ live in $\mathbb{R}^{C \times C}$, with size independent of $T_{\mathcal{D}}$. They can therefore be accumulated in a streaming fashion across calibration batches, so the entire calibration corpus need never reside in memory at once.

\begin{table*}[h]
\centering
\caption{SVD-based and solver-based computation of $\mathbf{M}^{*}$ on LLaMA-3.1-8B with $n = 7$ pruned layers, using 32 calibration batches ($T_{\mathcal{D}} = 65{,}536$, $C = 4{,}096$) in \texttt{float64}. Both procedures produce identical results. ACC AVG ($\uparrow$) denotes the mean zero-shot accuracy across the nine commonsense reasoning benchmarks used in our main experiments.}
\vspace{3pt}
\footnotesize
\begin{tabular}{l r r r r r}
\toprule
\textbf{Method }& \textbf{WIKI} &\textbf{C4} & \textbf{PTB} & \textbf{PPL AVG} & \textbf{Acc AVG}\\
\midrule
torch.linalg.svd & 21.35 & 21.53 & 40.56 & 27.81 & 60.00\\
torch.linalg.solve & 21.35 & 21.52 & 40.56 & 27.81 & 60.01 \\
\bottomrule
\end{tabular}
\label{tab:solver_comparison}
\end{table*}

As shown in Table~\ref{tab:solver_comparison}, the two procedures yield negligible differences in both perplexity and accuracy across all benchmarks, confirming that the small ridge regularization $\epsilon = 10^{-6}$ has no measurable effect on downstream performance.